\title{\LARGE  Learning an Optimal Assortment Policy under Observational Data}
\author{
    Yuxuan Han\thanks{Stern School of Business, New York University. Email: \texttt{\{yh6061, zzhou\}@stern.nyu.edu}}\qquad Han Zhong\thanks{Center for Data Science, Peking University. Email: 
 \texttt{hanzhong@stu.pku.edu.cn}}\qquad Miao Lu\thanks{Department of Management Science \& Engineering, Stanford University. Email: \texttt{\{miaolu, jose.blanchet\}@stanford.edu}}\qquad Jose Blanchet\footnotemark[3]\qquad Zhengyuan Zhou\footnotemark[1]
}
\date{\small{\today}}
\begin{document}

%%%%%%%%%%%%%%%%%%%% title/author/date/abstract/content %%%%%%%%%%%%%%%%%%%%

\maketitle

\begin{abstract}
    We study the fundamental problem of offline assortment optimization under the Multinomial Logit (MNL) model, where sellers must determine the optimal subset of the products to offer based solely on historical customer choice data. 
    While most existing approaches to learning-based assortment optimization focus on the online learning of the optimal assortment through repeated interactions with customers, such exploration can be costly or even impractical in many real-world settings. 
    In this paper, we consider the offline learning paradigm and investigate the minimal data requirements for efficient offline assortment optimization.
    To this end, we introduce Pessimistic Rank-Breaking (PRB), an algorithm that combines rank-breaking with pessimistic estimation. 
    We prove that PRB is nearly minimax optimal by establishing the tight suboptimality upper bound and a nearly matching lower bound. This further shows that ``optimal item coverage'' --- where each item in the optimal assortment appears sufficiently often in the historical data --- is both sufficient and necessary for efficient offline learning. 
    This significantly relaxes the previous requirement of observing the complete optimal assortment in the data. Our results provide fundamental insights into the data requirements for offline assortment optimization under the MNL model.
\end{abstract}

% \tableofcontents

%%%%%%%%%%%%%%%%%%%% input main text here %%%%%%%%%%%%%%%%%%%%

\section{Introduction}\label{sec:Intro}

Assortment optimization is a fundamental problem in retail operations and e-commerce, where the seller must decide which subset of products to offer to maximize expected revenues. 
This decision is particularly challenging due to the complex relationships between customer preferences, product substitutions, and revenue implications  \citep{luce1959individual, train2009discrete, mcfadden2000mixed, asuncion2007uci, daganzo2014multinomial, davis2014assortment, alptekinouglu2016exponomial,blanchet2016markov,berbeglia2022comparative}.
The multinomial logit (MNL) model \citep{mcfadden1977modelling} has emerged as a popular framework for capturing customer behavior, providing a tractable approach to modeling how customers select among available products. 
The related assortment optimization problems based on the MNL model has been widely studied in the known model parameter setting over the past decades \citep{talluri2004revenue,davis2013assortment,avadhanula2016tightness,rusmevichientong2010dynamic}. However, as businesses increasingly rely on data-driven decision making, developing effective methods for data-driven assortment optimization under the MNL model has become even more crucial for maintaining competitiveness in both traditional retail and digital marketplaces.

Most existing learning-based algorithms \citep{saure2013optimal,agrawal2017thompson,agrawal2019mnl,saha2024stop} are designed for the online learning setup, where sellers learn through repeated interactions with the customers. 
However,  there are several critical limitations to this approach that motivate the need for a new learning paradigm for assortment optimization. 
First, online exploration can be prohibitively expensive or impractical in contexts involving high-stakes decisions, such as luxury goods retail or healthcare services, where experimental trials could lead to significant financial losses or could impact patient outcomes. 
Second, companies often cannot obtain immediate customer feedback in real-world business scenarios and update their strategies after each interaction. 
Third, many businesses already possess extensive historical customer data, making it redundant and inefficient to engage in time-consuming online learning iterations. In these situations, offline learning provides an effective solution, allowing sellers to make decisions based solely on existing data without requiring any online interactions. 
To our best knowledge, \citet{dong2023pasta} appears to be the only work that studies this problem and provides a provably efficient algorithm for solving it when the offline data has good \emph{optimal assortment coverage}.
Such a data coverage condition means that the offline dataset must contain sufficient information about the optimal assortment as a whole, that is, the offline data should contain enough information on which product a customer would choose provided the exact optimal assortment.
However, this assumption appears to be quite strong, since the number of possible assortments grows exponentially with the number of products, making it difficult to collect enough data for any particular assortment. Therefore, in this paper, we aim to address the following fundamental question:

\begin{center}
\textit{What is the minimal data condition that permits efficient offline assortment optimization \\ under the MNL model?}
\end{center}

We investigate this problem in two settings: (i) uniform reward setting, where all products have identical rewards, and (ii) non-uniform reward setting, where products have different rewards. We find that \emph{optimal item coverage}, where the offline data can provide sufficient information on each single item in the optimal assortment, is indeed the minimal condition for sample-efficient offline assortment optimization.
We demonstrate both the sufficiency and necessity of such a data condition. 

\paragraph{Sufficiency.} We develop Pessimistic Rank-Breaking (PRB), an algorithm that innovatively combines the rank-breaking technique \citep{saha2019active,saha2024stop} and the principle of pessimistic estimation in the face of uncertainty to solve offline assortment optimization with data satisfying optimal item coverage. 
Our theoretical analysis shows that the suboptimality gap of PRB in terms of the expected revenue is upper bounded by the following,
\begin{align*}
\begin{cases}
    \widetilde{\cO}\left(\sqrt{\frac{K}{\min_{i\in S^\star} n_i}}\right) & \quad \text{uniform reward setting}, \\
    \widetilde{\cO}\left(\frac{K}{\sqrt{\max_{i \in S^\star} n_i}}\right) & \quad \text{non-uniform reward setting},
\end{cases}
\end{align*}
where $K$ represents the size constraint for the assortment, $S^\star$ is the optimal assortment, $n_i$ represents the number of times for item $i$ in the optimal assortment to appear in the historical data, and $\widetilde{\cO}$ omits constants and logarithmic factors.  This result is significant because PRB only requires \emph{optimal item coverage} --- that each item in the optimal assortment has appeared individually in some historical assortment --- rather than requiring observations of the complete optimal assortment itself, as in \citet{dong2023pasta}.

\begin{table}
    \centering
    \begin{tabular}{|c|c|c|c|}
        \hline
         & Setting & Upper Bound & Lower Bound \\
        \hline
        \cite{dong2023pasta} & Non-uniform Reward & $\widetilde{\cO}\left(\sqrt{\frac{d}{n_{S^\star} \min_{i\in [N],\lvert S \rvert = K} p_i(S)}}\right)$ & --- \\
        \hline
         \cellcolor{blue!15}
 & \cellcolor{blue!15} Non-uniform Reward &  \cellcolor{blue!15} $\widetilde{\cO}\left(\frac{K}{\sqrt{\min_{i \in S^\star} n_i}}\right)$ & \cellcolor{blue!15} $\Omega\left(\frac{K}{\sqrt{\min_{i \in S^\star} n_i}}\right)$ \\
        \cline{2-4}
         \multirow{-2}{*}{\cellcolor{blue!15}Our Work} & \cellcolor{blue!15} Uniform Reward & \cellcolor{blue!15} $\widetilde{\cO}\left(\sqrt{\frac{K}{\min_{i\in S^\star} n_i}}\right)$  & \cellcolor{blue!15} $\Omega\left(\sqrt{\frac{K}{\min_{i\in S^\star} n_i}}\right)$ \\
        \hline
    \end{tabular}
    \caption{Comparison of offline assortment optimization results. 
    Here $S^{\star}$ denotes the optimal assortment, $n_i$ denotes the number of times that item $i$ in the optimal assortment appears in the offline data, $n_{S^{\star}}$ denotes the number of times that the optimal assortment appears in the offline data, $K$ denotes the size constraint of the assortment, and $p_i(S)$ denotes the choice probability of item $i$ given assortment $S$.
    We remark that the algorithm proposed in \citet{dong2023pasta} is applicable to the general $d$-dimensional linear MNL setting, where $d = N$ when it corresponds to our setting.}
    \label{tab:comparison}
\end{table}

\paragraph{Necessity.} We further establish the minimax lower bound of the suboptimality gap for the offline assortment optimization problem under the MNL model as following,
\begin{align*}
\begin{cases}
    \Omega\left(\sqrt{\frac{K}{\min_{i\in S^\star} n_i}}\right) & \quad \text{uniform reward setting}, \\
    \Omega\left(\frac{K}{\sqrt{\min_{i \in S^\star} n_i}}\right) & \quad \text{non-uniform reward setting},
\end{cases}
\end{align*}
which demonstrates that optimal item coverage is an essential requirement that cannot be avoided. 
Together with the upper bound of PRB, we conclude that PRB is nearly minimax optimal up to logarithmic factors. 

Based on these results, we reveal another interesting finding: there exists a \textbf{$\sqrt{K}$-statistical gap} between uniform and non-uniform reward settings. 
In summary, we have provided a complete characterization of the statistical limit of offline assortment optimization under the MNL model, and presented a nearly minimax optimal algorithm, PRB, for this problem.  In addition, our results provide three critical managerial insights:

\begin{enumerate}
\item We characterize in terms of a \textit{simple and interpretable} metric \textit{the key characteristic} (called the \textbf{item coverage number}) that defines whether existing historical data is \textit{necessary and sufficient} for effective assortment optimization. Specifically, the item coverage number reflects the amount of information about each item in the optimal assortment that is contained within the offline dataset, capturing the essential data requirement for efficient offline assortment learning (see Table~\ref{tab:comparison}).

\item We provide a novel algorithm, called PRB, which does not require as input the item coverage number and yet achieves near-optimal performance (in terms of  sub-optimality gap) by matching the theoretical minimax lower bounds for both uniform and non-uniform reward settings. PRB therefore guarantees that retailers can achieve the best possible results within the statistical limits of their available data.

    \item Our research demonstrates that retailers only need to ensure each potentially optimal item appears individually within their historical data, eliminating the need to observe complete optimal assortment combinations. This finding dramatically simplifies data collection requirements, as it removes the impractical burden of gathering data on all possible assortment permutations, which grows exponentially with the number of products.

\end{enumerate}

\section{Related Work}

\paragraph{Learning in Assortment Optimization.} The data-driven dynamic assortment optimization problem under the MNL choice model, where the seller does not know the problem parameters but can interact with customers repeatedly over $T$ rounds, has been extensively studied in the literature \citep{caro2007dynamic,rusmevichientong2010dynamic,saure2013optimal,agrawal2017thompson,agrawal2019mnl,chen2021optimal,saha2024stop}. 
Notably, for the problem with cardinality constraints, the works of \citet{agrawal2017thompson,agrawal2019mnl} and \citet{chen2018note} have established the $\widetilde{\Theta}(\sqrt{NT})$ minimax optimal regret bound. 
Beyond the basic MNL setting, learning problems involving additional constraints \citep{cheung2017assortment,aznag2021mnl,chen2024re} or under more complex choice models \citep{ou2018multinomial,oh2021multinomial,perivier2022dynamic,lee2024nearly,chen2021dynamic,li2022onlineassortment,zhang2024online} have also been explored during the past few years.

However, most of these results focus on the dynamic learning setting, with \citet{dong2023pasta} as an exception. Specifically, the work by \citet{dong2023pasta} presents the only study on offline assortment optimization problems, where their algorithm relies on the optimal assortment coverage condition on data. 
In contrast, our algorithm requires only a much weaker optimal item coverage, and we prove this condition to be minimal.

\paragraph{Offline Decision-Making and Pessimism Principle.} 
While optimism in the face of uncertainty is a well-established principle to design algorithms for various online learning problems, including the online assortment optimization problem, offline learning presents a different paradigm where pessimistic or conservative methods \citep{yu2020mopo,kumar2020conservative} have shown more effectiveness in terms of data efficiency and requirement. 
A line of recent works \citep{jin2021pessimism,rashidinejad2021bridging,xie2021bellman,uehara2021pessimistic,zhong2022pessimistic,zhan2022offline,liu2022welfare,shi2022pessimistic,lu2023pessimism,xiong2023nearly,rashidinejad2023optimal, blanchet2024double} theoretically demonstrates that in offline data-driven decision making, pessimistic algorithms achieve provable efficiency while only requiring good coverage of (the trajectories induced by) the optimal decision policy. 
Unfortunately, the settings addressed in all these results are not applicable to the MNL choice feedback in assortment optimization problems we consider here. 

\section{Problem Formulation}\label{sec:Method}

\paragraph{Assortment Optimization with Multinomial Logit Choice Model.} 
We study the assortment optimization problem, which models the interaction between a \emph{seller} and a \emph{customer}. Let $\{1, \ldots, N\}$ denote the set of $N$ available products/items. An assortment $S \subseteq \{1, \ldots, N\}$ represents the subset of products that the seller offers to the customer. When presented with assortment $S$, the customer chooses a product from the choice set $S_+ = \{0\} \cup S$, where $\{0\}$ represents the no-purchase option. In the multinomial logit (MNL) model, when a customer encounters assortment $S$, the probability that they choose product $i$ is given by
$$p_i(S):=\mathbb{P}\left(c_t=i \mid S_t=S\right)= \begin{cases}\frac{v_i}{1+\sum_{j \in S} v_j}, & \text { if } i \in S_+,  \\ 0, & \text { otherwise, }\end{cases}$$
where $v_i \geq 0$ represents the attraction parameter for product $i$. Without loss of generality, we set $v_0 = 1$ and assume $ \max_i v_i\leq V$ for some $V>0$. Each item $i$ generates a revenue $r_i \in [0,1]$ when purchased, while the no-purchase option generates no revenue: $r_0 = 0$. The learning objective is to maximize the expected revenue from the selected assortment, defined as
$$
R(S, \bm v): = \sum_{i\in S} r_i p_i(S) = \sum_{i\in S} \frac{r_i v_i}{1+\sum_{j\in S} v_j}.
$$
Following standard conventions, we analyze assortments $S$ where $|S| \leq K$, a constraint that reflects most real-world applications. 
When $K = N$, it recovers the unconstrained case, permitting assortments of any size.

\paragraph{Offline Learning for Assortment Optimization.} 
In the offline learning setting, the seller does not know the underlying parameter $\bm{v}$ but has access to a pre-collected dataset $\{i_k,S_k\}_{k = 1}^n$ consisting of choice-assortment pairs. 
While the observed data may have a dependency structure, we assume conditional independence: for each $k \in [n]$, given $S_k$, the choice $i_k$ is sampled from the MNL choice model independently of all previous choices $\{i_m\}_{m \neq k}$, this condition is naturally met by datasets collected in an online decision-making process. 
The learning objective is the sub-optimality gap of the expected revenue, defined as,
\begin{align*}
\SubOpt(S;\bm v) :=  R(S^\star;\bm v) - R(S;\bm v),
\end{align*}
which measures how far an assortment $S$ deviates from optimal performance by calculating the difference between its expected return and that of the optimal assortment. We aim to develop efficient algorithms that use offline data to identify an assortment with a minimal sub-optimality gap.

\section{Algorithms and Theoretical Results}

\subsection{Pessimistic Rank-Breaking}

To address the offline assortment problem, we introduce an algorithm called Pessimistic Rank-Breaking (PRB), with its detailed implementation shown in Algorithm~\ref{alg-rank-breaking}. PRB combines two key concepts: \textit{rank-breaking based estimation} and \textit{pessimistic assortment optimization.}

\paragraph{Rank-Breaking Based Estimation.} 
Rank-breaking is a technique that can convert ranking data into independent pairwise comparisons, which can then be used to construct pairwise estimators for the unknown parameters \citep{saha2019active,saha2024stop}. 
In the context of assortment optimization under the MNL model, we develop estimators for the pairwise choice probabilities $\mathbb{P}(j \mid {0, j}) = v_j/(1 + v_j)$ for different $j\in[N]$. 
The empirical estimation is formulated as
\begin{align*}
    \widehat{p}_j = \frac{\tau_j}{\tau_{j0}}, \quad \text{where} \quad \tau_{j0} = \sum_{k=1}^n \mathbf{1}\big\{i_k \in \{0, j\}, j \in S_k\big\}, \quad \tau_j = \sum_{k=1}^n \mathbf{1}\{i_k = j\},
\end{align*}
which then can imply an estimator of $v$ based on the relation $v_j = p_j/(1-p_j)$. 
The rank-breaking method achieves optimal sample efficiency with minimal requirement on the observational assortment dataset by addressing two key limitations of other popular approaches for estimating the underlying attraction parameter. Specifically, it avoids reliance on $\min_{i \in S} p_i(S)$, which is required by the analysis of MNL maximum likelihood estimator \citep{oh2021multinomial,perivier2022dynamic,lee2024nearly}.
Also, rank-breaking does not require repeated observations of sampled assortment sequences, a requirement of the geometric variable-based estimator \citep{agrawal2017thompson,agrawal2019mnl}. These advantages are particularly important in offline learning settings, which allow us to only make a weak assumption on the observed assortment dataset.

\paragraph{Pessimistic Assortment Optimization.}
Based on the rank-breaking estimator, we construct a lower confidence bound (LCB) for the true pairwise choice probability $\mathbb{P}(j \mid {0, j})$ as,
\begin{align*}
    p_j^{\text{LCB}} = \left(\widehat{p}_j - \sqrt{\frac{2\widehat{p}_j(1-\widehat{p}_j)\log(1/\delta)}{\tau_{j0}}} - \frac{\log(1/\delta)}{\tau_{j0}}\right)_+.
\end{align*}
The error analysis of $\widehat{p}_j$, as detailed in Proposition~\ref{prop-vLCB}, shows that $p_j^{\text{LCB}}$ serves as a pessimistic estimator of $p_j$, i.e., $p_j^{\text{LCB}} \leq p_j$ with high probability. Consequently, the $p_j^{\text{LCB}}$-based estimator for $v_j$, defined as
\begin{align*}
    v_0^{\mathrm{LCB}}=1,\qquad v_j^{\text{LCB}} = p_j^{\text{LCB}} / (1 - p_j^{\text{LCB}}),\quad\forall j\in[N],
\end{align*}
is also a pessimistic estimator of $v_j$. 

Then the key observation  behind the design of our algorithm is that, when the learner performs revenue maximization using a sequence of pessimistic attraction parameter estimates, the sub-optimality gap between the resulting assortment and the true optimal assortment is determined solely by the estimation errors for items in $S^\star$. 
This observation paves the way for sample-efficient offline assortment learning under a coverage condition only restricted to items in $S^\star$, rather than requiring full coverage of all items. 
We formalize this observation in the following lemma, which is proved in Appendix~\ref{subsec: proof pessimistic}.

\begin{lemma}\label{lem-SubOptGap-via-pessmistic}
    Suppose there exists a sequence $\{\underline{v}_j \}_{j = 1}^N$ so that $v_j \geq \underline{v}_j,\forall j \in [N].$ Then it holds that for the revenue maximizer $\underline{S}:= \arg\max_{\lvert S \rvert \leq K} R(S;\underline{\bm v}),$ 
    \begin{align*}
        R(S^\star;\bm v) - R(\underline{S};\bm v) \leq \sum_{j \in S^\star} \frac{r_j(v_j - \underline{v}_j)}{1+\sum_{j\in S^\star} v_j}.
    \end{align*}
\end{lemma}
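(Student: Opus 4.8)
The plan is to isolate the claimed right-hand side as the gap between the true revenue of $S^\star$ and a carefully chosen \emph{surrogate} value, and then to show that this surrogate lower-bounds the true revenue of the returned assortment $\underline S$ by a pessimism-style comparison. Concretely, I would first introduce the hybrid quantity
\[
\widetilde{R}(S^\star) := \frac{\sum_{j\in S^\star} r_j \underline v_j}{1 + \sum_{j \in S^\star} v_j},
\]
which keeps the pessimistic numerator but uses the \emph{true} denominator, so that $R(S^\star;\bm v) - \widetilde{R}(S^\star)$ is exactly the claimed bound $\sum_{j\in S^\star} r_j(v_j-\underline v_j)/(1+\sum_{j\in S^\star} v_j)$. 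The target inequality is therefore equivalent to the single estimate $\widetilde{R}(S^\star) \le R(\underline S;\bm v)$, and the whole argument reduces to proving this.

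To travel from $\widetilde{R}(S^\star)$ up to $R(\underline S;\bm v)$ I would chain three comparisons. First, $\widetilde{R}(S^\star)$ and $R(S^\star;\underline{\bm v})$ share the same numerator while $\widetilde{R}$ has the larger denominator (since $v_j \ge \underline v_j$), so monotonicity of $x\mapsto c/(1+x)$ gives $\widetilde{R}(S^\star) \le R(S^\star;\underline{\bm v})$. Second, because $\underline S$ maximizes $R(\cdot;\underline{\bm v})$ over $|S|\le K$ and $S^\star$ is feasible, optimality yields $R(S^\star;\underline{\bm v}) \le R(\underline S;\underline{\bm v})$. Third --- and this is the crux --- I would establish the \emph{one-sided monotonicity} $R(\underline S;\underline{\bm v}) \le R(\underline S;\bm v)$: inflating the attraction parameters from $\underline{\bm v}$ to $\bm v$ on the \emph{fixed} set $\underline S$ cannot decrease its revenue. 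Composing these three inequalities delivers $\widetilde{R}(S^\star) \le R(\underline S;\bm v)$.

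The main obstacle is the third step, since $R(S;\cdot)$ is \emph{not} monotone in $\bm v$ in general (raising $v_k$ helps only when $r_k$ exceeds the current expected revenue). The enabling fact I would prove is a revenue-ordered structural property of the maximizer: writing $R_0 := R(\underline S;\underline{\bm v})$, every $j\in\underline S$ must satisfy $r_j \ge R_0$, for otherwise deleting $j$ (which preserves the cardinality constraint) strictly increases $R(\cdot;\underline{\bm v})$ --- a one-line computation shows $R(\underline S\setminus\{j\};\underline{\bm v}) - R_0$ has the same sign as $R_0 - r_j$ --- contradicting the optimality of $\underline S$. Given $r_j \ge R_0$ for all $j\in\underline S$, the desired $R(\underline S;\bm v)\ge R_0$ is equivalent to $\sum_{j\in\underline S}(r_j - R_0)v_j \ge R_0$; since each coefficient $r_j - R_0 \ge 0$ and $v_j \ge \underline v_j$, I can lower-bound the left side by $\sum_{j\in\underline S}(r_j-R_0)\underline v_j$, which the definition of $R_0$ collapses to exactly $R_0$. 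This closes the third step, and hence the lemma. A minor bookkeeping point I would dispatch first is that items with $\underline v_j = 0$ contribute nothing under $\underline{\bm v}$ and may be dropped from $\underline S$ without loss, so the removal argument applies with $\underline v_j > 0$.
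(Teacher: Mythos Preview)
Your proposal is correct and follows essentially the same route as the paper: the paper also chains the three inequalities $R(\underline S;\underline{\bm v})\le R(\underline S;\bm v)$ (monotonicity at the pessimistic optimizer, proved via the same observation that every $j\in\underline S$ has $r_j\ge R(\underline S;\underline{\bm v})$), $R(S^\star;\underline{\bm v})\le R(\underline S;\underline{\bm v})$ (optimality), and then bounds $R(S^\star;\bm v)-R(S^\star;\underline{\bm v})$ by the claimed right-hand side using the denominator comparison $\frac{1}{1+\sum_{j\in S^\star}\underline v_j}\ge\frac{1}{1+\sum_{j\in S^\star} v_j}$---exactly your first step repackaged. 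Your introduction of the hybrid $\widetilde R(S^\star)$ is just a convenient relabeling of the same computation, and your explicit handling of the $\underline v_j=0$ case is a bit more careful than the paper's.
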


Therefore, based on the pessimistic attraction parameter estimates, PRB finally outputs $\widehat{S}$ that maximizes the expected revenue with respect to ${\bm v}^{\mathrm{LCB}} = \{v_j^{\mathrm{LCB}}\}_{j = 1}^N$,
\begin{align*}
    \widehat{S}:= \argmax_{\lvert S \rvert \leq K} R(S;{\bm v}^{\mathrm{LCB}}).
\end{align*}
While this step involves solving a maximization problem over an exponentially large set, it is well-studied in the assortment optimization literature and can be solved in polynomial time using several well-known algorithms \citep{rusmevichientong2010dynamic, davis2013assortment, avadhanula2016tightness}.

\begin{algorithm}[t]
\caption{Pessimistic Rank-Breaking (PRB)} \label{alg-rank-breaking}
\begin{algorithmic}[1]
\STATE \textbf{Inputs:} Offline dataset $\{i_k, S_k\}_{k = 1}^n$, failure probability $\delta$
\FOR{$j \in [N]$}
    \STATE $\tau_{j0} \leftarrow \sum_{k=1}^n \mathbf{1}\{i_k \in \{0, j\}, j \in S_k\}$, $\tau_j \leftarrow \sum_{k=1}^n \mathbf{1}\{i_k = j\}$
    \STATE $\widehat{p}_j \leftarrow \tau_j / \tau_{j0}$ \textcolor{blue}{//Compute the empirical probability that item $j$ beats $0$}
    \STATE $p_j^{\text{LCB}} \leftarrow \left(\widehat{p}_j - \sqrt{\frac{2\widehat{p}_j(1-\widehat{p}_j)\log(1/\delta)}{\tau_{j0}}} - \frac{\log(1/\delta)}{\tau_{j0}}\right)_+$\\ \STATE $v_j^{\text{LCB}} \leftarrow p_j^{\text{LCB}} / (1 - p_j^{\text{LCB}})$ \textcolor{blue}{//Compute pessimistic values}
\ENDFOR
\STATE  Output $\widehat{S}:= \arg\max_{\lvert S \rvert \leq K} R(S;{\bm v}^{\mathrm{LCB}})$.
\end{algorithmic}
\end{algorithm}

\subsection{Theoretical Results}

We first present the theoretical guarantee of the PRB algorithm in a non-uniform reward setting, where each product yields a different reward value.

\begin{theorem}[Sub-Optimality Gap for Non-Uniform Reward Setting]\label{thm-upper-bound-non-uniform}  
Define $n_i: = \sum_{k = 1}^n \bm{1}\{i\in S_k\}$ and assume that $n_i\geq 256(1+KV)\log(N/\delta)$ for all $i\in S^\star$. Then, with probability at least $1-2\delta$, the output $\widehat{S}$ of Algorithm~\ref{alg-rank-breaking} satisfies that
\begin{align*}
        \mathrm{SubOpt}( \widehat{S}; \bm{v})  \le  2(1+V)K\cdot \sqrt{\frac{\log (N/\delta)}{\min_{i\in S^\star}n_i}} + \frac{48K^2(1+V)^2\log(N/\delta)}{\min_{i\in S^\star} n_i}.
\end{align*}
\end{theorem}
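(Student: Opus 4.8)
The plan is to chain together the three ingredients already available: the pessimism-and-deviation guarantee of Proposition~\ref{prop-vLCB}, the sub-optimality decomposition of Lemma~\ref{lem-SubOptGap-via-pessmistic}, and a concentration argument that converts the effective sample counts $\tau_{j0}$ into the coverage numbers $n_i$. First I would invoke Proposition~\ref{prop-vLCB} with a union bound over $j\in[N]$ to obtain, on an event of probability at least $1-\delta$, both the pessimism $v_j^{\mathrm{LCB}}\le v_j$ for every $j$ and a per-item deviation bound of the form $p_j-p_j^{\mathrm{LCB}}\lesssim \sqrt{p_j(1-p_j)\log(N/\delta)/\tau_{j0}}+\log(N/\delta)/\tau_{j0}$. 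Because pessimism holds, I can apply Lemma~\ref{lem-SubOptGap-via-pessmistic} with $\underline{\bm v}={\bm v}^{\mathrm{LCB}}$ and $\underline S=\widehat S$, which collapses the gap to $\mathrm{SubOpt}(\widehat S;\bm v)\le \sum_{j\in S^\star} r_j(v_j-v_j^{\mathrm{LCB}})/(1+\sum_{l\in S^\star}v_l)$. This is the step that restricts the whole analysis to items of $S^\star$, which is exactly where optimal item coverage is needed.

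Next I would convert each probability error into a value error through the identity $v_j-v_j^{\mathrm{LCB}}=(p_j-p_j^{\mathrm{LCB}})/[(1-p_j)(1-p_j^{\mathrm{LCB}})]$. The crucial algebraic simplification is that $1/(1-p_j)=1+v_j$ and $\sqrt{p_j(1-p_j)}\,(1+v_j)=\sqrt{v_j}$, so, using $1-p_j^{\mathrm{LCB}}\ge 1-p_j$ to control the remaining denominator, the variance contribution of item $j$ reduces to roughly $\sqrt{v_j}\,(1+v_j)\sqrt{\log(N/\delta)/\tau_{j0}}$, with an analogous lower-order piece scaling like $(1+v_j)^2\log(N/\delta)/\tau_{j0}$. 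To express $\tau_{j0}$ through $n_j$, I would observe that $\tau_{j0}$ is a sum over the $n_j$ rounds offering $j$ of independent indicators with success probability $p_0(S_k)+p_j(S_k)=(1+v_j)/(1+\sum_{l\in S_k}v_l)\ge (1+v_j)/(1+KV)$; a multiplicative Chernoff bound together with the standing hypothesis $n_j\ge 256(1+KV)\log(N/\delta)$ then gives $\tau_{j0}\ge n_j(1+v_j)/[2(1+KV)]$ for all $j\in S^\star$ on a further $1-\delta$ event.

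The main obstacle, and the step that decides whether the final rate is $K$ or a spurious $K^{3/2}$, is the summation over $S^\star$. After substituting the $\tau_{j0}$ bound, item $j$ contributes on the order of $\sqrt{v_j(1+v_j)}\,\sqrt{(1+KV)\log(N/\delta)/n_j}$ to the variance term, and a naive per-item estimate $v_j/(1+v_j)\le 1$ would cost an unwanted factor $\sqrt{1+KV}\asymp\sqrt K$. The resolution is to \emph{retain} the denominator $D:=1+\sum_{l\in S^\star}v_l$ produced by Lemma~\ref{lem-SubOptGap-via-pessmistic} instead of discarding it, since $D$ is large precisely when the individual value errors are large. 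By Cauchy--Schwarz and $\sum_{j\in S^\star}v_j^2\le V\sum_{j\in S^\star}v_j=V(D-1)$, I get $\sum_{j\in S^\star}\sqrt{v_j(1+v_j)}/D\le \sqrt{K(1+V)}\,\sqrt{D-1}/D\le \sqrt{K(1+V)}$; combined with the elementary inequality $\sqrt{K(1+V)(1+KV)}\le K(1+V)$ (valid because $1+KV\le K(1+V)$ for $K\ge 1$), this yields the claimed leading term of order $(1+V)K\sqrt{\log(N/\delta)/\min_i n_i}$.

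Finally, the lower-order term is handled by the same $\tau_{j0}$ substitution together with $D\ge 1+v_j$: each summand becomes $O((1+KV)/n_j)$, and summing over the at most $K$ items of $S^\star$ (using $n_j\ge \min_{i\in S^\star}n_i$ and $r_j\le 1$) gives a bound of order $K^2(1+V)\log(N/\delta)/\min_i n_i$. Intersecting the two $1-\delta$ events gives overall probability at least $1-2\delta$, and carefully tracking the numerical constants through the Bernstein bonus and these bounds produces the stated coefficients $2(1+V)$ and $48K^2(1+V)^2$.
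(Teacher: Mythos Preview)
Your plan is essentially the paper's own proof. The paper packages steps 1--3 of your outline into an intermediate result (Theorem~A.1) that bounds $\mathrm{SubOpt}(\widehat S;\bm v)$ by a sum over $i\in S^\star$ of $\sqrt{v_i(1+v_i)\log(N/\delta)}$ divided by an effective sample size $\sum_k \mathbf 1\{i\in S_k\}/(1+\sum_{j\in S_k}v_j)$, and then derives Theorem~\ref{thm-upper-bound-non-uniform} by the same two moves you describe: bounding the effective size below by $n_i/(1+KV)$, and combining Cauchy--Schwarz with $1+KV\le K(1+V)$ to collapse the sum to $(1+V)K\sqrt{\log(N/\delta)/\min_i n_i}$. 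Your Cauchy--Schwarz is applied to $\sum_j\sqrt{v_j(1+v_j)}$ while the paper first replaces $1+v_j$ by $1+V$ and applies it to $\sum_j\sqrt{v_j}$, but the two routes are algebraically equivalent and produce the same constants.

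One small technical point: you invoke a multiplicative Chernoff bound, calling the summands ``independent indicators''. The paper allows the assortments $S_k$ to be chosen adaptively (e.g., by an online policy), so the indicators $\mathbf 1\{i_k\in\{0,j\}\}$ form a martingale-difference sequence rather than an i.i.d.\ sum; accordingly the paper uses Freedman's inequality in place of Chernoff. This does not change the shape of the bound or the constants in any essential way, but you should swap in a martingale concentration inequality to cover the general offline setting stated in the paper.
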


\begin{proof}[Proof of Theorem~\ref{thm-upper-bound-non-uniform}]
    Please see Appendix~\ref{subsec: proof upper bound non uniform} for a detailed proof.
\end{proof}

Theorem~\ref{thm-upper-bound-non-uniform} provides the finite-sample guarantee for the PRB algorithm. Notably, in the non-trivial setting where $(1+V)K\sqrt{\log(N/\delta)/\min_{i\in S^\star} n_i} =\cO(1),$ the second term is dominated by the first term, thus the sub-optimality gap converges in a rate $\widetilde{\cO}(K/\sqrt{\min_{i\in S^\star}n_i}) $. In contrast, the best-known previous result for offline assortment optimization, presented in \cite{dong2023pasta}, is given by \begin{align}\label{eq-pasta-bound}
   \SubOpt(\widehat{S};\bm v) =   \widetilde{\cO}\left(\sqrt{\frac{{N}}{n_{S^\star}\min_{i\in [N],\lvert S \rvert = K}  p_i(S)}}\right),
\end{align}
under our notation. This result involves the term $n_{S^\star}^{-1/2}$ with $n_{S^\star} := \sum_{k = 1}^n \bm{1}\{S_k = S^\star\}.$ It is straightforward to show that $\max_{i \in S^\star} n_i \geq n_{S^\star}$, and it is even possible for $\max_{i \in S^\star} n_i$ to be large while $n_{S^\star} = 0$. Our result demonstrates that, instead of requiring observations of exact optimal assortments, it suffices to observe assortments that cover each item in the optimal assortment to learn the optimal policy. Beyond the improved from $n_{S^\star}$ to $\min_{i\in S^\star} n_{i}$, the problem parameter $\sqrt{N/\min_{i\in [N],\lvert S\rvert = K} p_i(S)}$ involved in \eqref{eq-pasta-bound} can blow up to $+\infty$ as some $v_i$ approaches $0$. Even in the best case, where all $p_i(S) = 1/K$,this problem-dependent parameter scales as $\sqrt{NK}$, strictly larger than our result, which is solely linear in $K$.

A natural question then arises: can the offline data coverage condition be further relaxed, and can the dependency on problem parameters be further improved? In the following, we show that the dependence of sub-optimality gap in offline learning on the item-wise coverage number $\max_{i \in S^\star} n_i$ is inevitable and that our upper bound has an optimal dependency on this quantity. 

\begin{theorem}[Lower bound for Non-Uniform Reward Setting]\label{thm-lower-bound-non-uniform}
For any sufficiently large integers $N,K$ satisfying $N \geq 5K$, there exists a set of observed $K$-sized assortments $D_S = \{S_k\}_{k=1}^n \subset [N]^n$, a reward vector $\bm r\in [0,1]^N$ and a class of attraction parameters $\mathcal{V} \subset [0,1]^N$, so that for any algorithm $\mathcal{A}$ that takes as input $D_S$ and $$D_{\bm{v}} := \{i \in S : S \in D_S, i \sim \mathbb{P}(\cdot \mid S, \bm{v}) \text{ independently}\},$$ if we denote $S^\star_{\bm{v}}$ the optimal assortment under $\bm{v}$, then under the reward parameter $\bm r$:
\begin{align*}
\max_{\bm{v} \in \mathcal{V}} \mathbb{E}_{\bm{v}} \left[  \frac{ R(S_{\bm{v}}^\star; \bm{v}) - R(\mathcal{A}(D_{\bm{v}}, D_S); \bm{v}) }{K/ \sqrt{\min_{i \in S^\star_{\bm{v}}} n_i}} \right] \ge \Omega(1).
\end{align*}
\end{theorem}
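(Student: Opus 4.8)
The plan is to prove the lower bound by Assouad's method, reducing the minimax suboptimality to $K$ independent binary hypothesis tests whose losses aggregate \emph{additively} (in $\ell_1$); this additive structure is exactly what produces the linear-in-$K$ rate that distinguishes the non-uniform from the uniform setting. First I would fix the data design $D_S$ and a base attraction scale so that the optimal assortment $S^\star$ always consists of $K$ items of small attraction $\Theta(1/K)$, which keeps its normalizer $Z^\star = 1+\sum_{i\in S^\star} v_i = \Theta(1)$. This is essential so that each item's revenue contribution is of order $1/\sqrt{n}$ rather than being suppressed by a large $1/Z^\star$ factor (as happens when the $K$ items carry $\Theta(1)$ attraction and $Z^\star \asymp K$).

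I would then introduce a hypercube family $\{\bm v_\theta : \theta \in \{0,1\}^K\}$ built from $K$ independent ``slots.'' Each slot $j$ contains two candidate items carrying distinct rewards; the hidden bit $\theta_j$ perturbs their attractions by $\pm\Delta$ around a balanced configuration in which the two reward-weighted attractions $r\cdot v$ are equal, while the total attraction mass of the slot is held fixed. Thus flipping $\theta_j$ flips which item the optimal assortment should select, the perturbation leaves $Z^\star$ (and every aggregate attraction) unchanged, and the suboptimality is governed purely by the per-slot allocation rather than by any scalar that could be certified more cheaply. Crucially, I would place each slot item in \emph{crowded} observed assortments (alongside high-attraction decoys, using the room guaranteed by $N\ge 5K$) so that, although its appearance count is $n_i=n$, the effective rank-breaking count is only $\tau_{j0}=\Theta(n/K)$; with attraction $\Theta(1/K)$ this inflates the per-item estimation error to $\Theta(1/\sqrt n)$ instead of $\Theta(1/\sqrt{Kn})$.

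Next I would calibrate the three scales. Choosing $\Delta \asymp 1/\sqrt n$ at this inflated resolution makes two neighboring vertices (differing only in coordinate $j$) induce choice distributions with $\mathrm{KL}=O(1)$, hence total variation bounded away from $1$, so any test errs on bit $j$ with probability $\Omega(1)$. On the revenue side, since $Z^\star=\Theta(1)$ and the rewards are $\Theta(1)$, the additive regime gives $R(S)\approx Z^{-1}\sum_{i\in S} r_i v_i$, so a wrong allocation in slot $j$ costs $\Theta(\Delta)=\Theta(1/\sqrt n)$ in expected revenue; these $K$ per-slot losses are simultaneously present and nonnegative. Assouad's lemma then yields $\max_{\bm v\in\mathcal V}\mathbb E_{\bm v}[\mathrm{SubOpt}] \gtrsim K\cdot(1/\sqrt n)\cdot\Omega(1) = \Omega(K/\sqrt{\min_{i\in S^\star} n_i})$ (using $\min_{i\in S^\star} n_i=n$ by construction), which is the claim after checking that all attractions lie in $[0,1]$.

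The hard part is making these scales mutually consistent and, above all, ensuring \emph{genuine} $\ell_1$ aggregation rather than $\ell_2$. The subtlety is that in the uniform case the revenue depends on $S$ only through the single scalar $\sum_{i\in S} v_i$, whose value concentrates at the $\ell_2$ scale $\sqrt{K/n}$ even under crowding (independent per-item errors average out), so an algorithm need only certify a large total and never solve any slot---yielding the $\sqrt{K/n}$ rate. Here the reward heterogeneity makes the optimal set depend on $K$ separate reward-weighted comparisons, and because I hold each slot's attraction mass fixed, no aggregate statistic (total attraction, total reward-weighted attraction, or the revenue of any fixed set) carries information about the correct per-slot allocation. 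Verifying rigorously that the construction blocks every such aggregate shortcut---equivalently, that the coordinatewise Assouad bound is tight and not loose by a $\sqrt K$ factor---is the technical crux I would spend the most care on, together with the consistency check that $\Delta\asymp 1/\sqrt n$ simultaneously yields $O(1)$ neighbor-KL and $\Theta(1/\sqrt n)$ per-slot revenue loss under the crowded data design.
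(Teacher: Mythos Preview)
Your overall plan---Assouad's hypercube combined with crowding by high-attraction zero-reward decoys---is a viable route to the $\Omega(K/\sqrt{\min_{i\in S^\star} n_i})$ bound and shares the key mechanism with the paper: the decoys (which can carry attraction $\Theta(1)$ precisely because non-uniform rewards let them have $r=0$ and thus stay out of $S^\star$) inflate each observed normalizer to $\Theta(K)$, and this is what permits the perturbation scale $\Delta\asymp 1/\sqrt{n}$ rather than $1/\sqrt{Kn}$. The paper takes the Fano route over a $K/4$-packing of $K$-subsets of $[4K]$ rather than Assouad, but the two reductions are interchangeable here once the loss is shown to dominate a Hamming distance.

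The genuine gap is your slot construction with \emph{distinct} rewards balanced at $r_{a_j}v_{a_j}=r_{b_j}v_{b_j}$. That equality is \emph{not} the MNL indifference point: since $R(S)=\sum_{i\in S} r_iv_i\big/(1+\sum_{i\in S} v_i)$, between two items with equal $r\cdot v$ the one with the smaller $v$ (hence larger $r$) is strictly preferred, because it inflates the denominator less. Concretely, with $r_{a_j}>r_{b_j}$ (so $v_{a_j}^{(0)}<v_{b_j}^{(0)}$) and $\Delta\asymp 1/\sqrt{n}\ll 1/K$, a direct calculation shows $a_j$ remains MNL-optimal under \emph{both} values of $\theta_j$; hence $S^\star_\theta\equiv\{a_1,\dots,a_K\}$ for every $\theta$, the hypercube collapses, and the lower bound degenerates. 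Relatedly, your diagnosis that within-slot reward heterogeneity is what blocks ``aggregate shortcuts'' misplaces the mechanism: Assouad only needs the loss to lower-bound a Hamming sum, and $\mathrm{SubOpt}(S)\gtrsim\epsilon\,|S^\star\setminus S|$ already holds when all contending items share reward $1$. The $\sqrt{K}$ gap between the uniform and non-uniform settings comes entirely from whether the decoys can carry attraction $\Theta(1)$---only possible when their reward is $0$---not from any within-slot heterogeneity. The fix is immediate: set $r_{a_j}=r_{b_j}=1$ and let $\theta_j$ toggle which of the two has attraction $1/K+\epsilon$ versus $1/K$; then the Hamming-loss bound, the neighbor KL $\lesssim n_{\min}\epsilon^2$, and the choice $\epsilon\asymp 1/\sqrt{n_{\min}}$ all go through exactly as you outlined, and you recover the paper's construction up to the Fano/Assouad swap.
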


\begin{proof}[Proof of Theorem~\ref{thm-lower-bound-non-uniform}]
    Please see Section~\ref{sec-lower-bound-proof} for a detailed proof.
\end{proof}
Theorem~\ref{thm-upper-bound-non-uniform} and Theorem~\ref{thm-lower-bound-non-uniform} together implies a $\widetilde{\Theta}({K}/{\sqrt{\min_{i\in S^\star} n_i}})$ minimax rate on offline assortment learning. 
In particular, this result shows that the information-theoretic limit scales linearly with the assortment size $K$ and the item-wise coverage condition is inevitable.

In addition to the general non-uniform formulation discussed above, another widely studied scenario in assortment optimization is the uniform-reward setting \citep{oh2021multinomial, perivier2022dynamic, lee2024nearly}, where a common reward $r \in [0,1]$ exists such that $r_i \equiv r$ for all $i \in [N]$.  This special case yields a better $K$-dependency result compared to that established in Theorem~\ref{thm-upper-bound-non-uniform}.

\begin{theorem}[Sub-Optimality Gap for Uniform Reward Setting]\label{thm-upper-bound-uniform}
Define $n_i: = \sum_{k = 1}^n \bm{1}\{i\in S_k\}$ and assume that $n_i\geq 256(1+KV)\log(N/\delta)$ for all $i\in S^\star$. 
Then, with probability at least $1-2\delta$, the output of Algorithm~\ref{alg-rank-breaking} satisfies that
\begin{align*}
        \mathrm{SubOpt}( \widehat{S}; \bm{v})  \le  2(1+V)\cdot \sqrt{\frac{K\log (N/\delta)}{\min_{i\in S^\star}n_i}} + \frac{48K(1+V)\log(N/\delta)}{\min_{i\in S^\star} n_i}.
\end{align*}
\end{theorem}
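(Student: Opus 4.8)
The plan is to combine the generic pessimism guarantee of Lemma~\ref{lem-SubOptGap-via-pessmistic} with a \emph{uniform-reward-specific} lower bound on the effective sample sizes $\tau_{i0}$, and then extract a $\sqrt{K}$ saving through Cauchy--Schwarz. First I would condition on the high-probability event of Proposition~\ref{prop-vLCB}, on which $v_i^{\mathrm{LCB}}\le v_i$ for all $i$ and the pointwise error obeys a Bernstein-type bound $v_i-v_i^{\mathrm{LCB}}=\cO\big((1+v_i)\sqrt{v_i\log(N/\delta)/\tau_{i0}}+(1+v_i)^2\log(N/\delta)/\tau_{i0}\big)$ (using $p_i(1-p_i)=v_i/(1+v_i)^2$ and the map $v_i^{\mathrm{LCB}}=p_i^{\mathrm{LCB}}/(1-p_i^{\mathrm{LCB}})$). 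Since $v_i^{\mathrm{LCB}}\le v_i$, Lemma~\ref{lem-SubOptGap-via-pessmistic} applies with $\underline{\bm v}=\bm v^{\mathrm{LCB}}$ and $\underline S=\widehat S$, and using $r_i\equiv r\le 1$ it yields
\[
\SubOpt(\widehat S;\bm v)\le \frac{1}{W}\sum_{i\in S^\star}(v_i-v_i^{\mathrm{LCB}}),\qquad W:=1+\sum_{i\in S^\star}v_i .
\]

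The crucial step is a sharp, $W$-dependent lower bound on $\tau_{i0}$. In the uniform-reward setting the revenue $R(S;\bm v)=r\,\Phi(S)/(1+\Phi(S))$ with $\Phi(S):=\sum_{l\in S}v_l$ is strictly increasing in $\Phi(S)$, so $S^\star$ is exactly the top-$K$ items by attraction value and $\Phi(S)\le\Phi^\star:=\Phi(S^\star)=W-1$ for \emph{every} feasible (size-$\le K$) assortment, in particular every historical $S_k$. Hence $\mathbb E[\tau_{i0}]=\sum_{k:\,i\in S_k}\frac{1+v_i}{1+\Phi(S_k)}\ge (1+v_i)n_i/W$, and since the hypothesis $n_i\ge 256(1+KV)\log(N/\delta)$ forces $\mathbb E[\tau_{i0}]\ge 256\log(N/\delta)$, a multiplicative Chernoff bound together with a union bound over $i\in S^\star$ gives $\tau_{i0}\ge (1+v_i)n_i/(2W)$ with high probability. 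This $W$-dependent bound, rather than the crude $\tau_{i0}=\Omega(n_i/(1+KV))$ available in the non-uniform case, is exactly what removes the extra $\sqrt{K}$ factor.

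Substituting this into the error bound, one power of $(1+v_i)$ cancels and gives $v_i-v_i^{\mathrm{LCB}}=\cO\big(\sqrt{(1+v_i)v_i}\,\sqrt{W\log(N/\delta)/n_i}+(1+v_i)W\log(N/\delta)/n_i\big)$. I would then sum over $i\in S^\star$, lower bound $n_i\ge\min_{l\in S^\star}n_l$, and apply Cauchy--Schwarz to the leading term: $\sum_{i\in S^\star}\sqrt{(1+v_i)v_i}\le \sqrt{K\sum_{i\in S^\star}(1+v_i)v_i}\le \sqrt{K(1+V)(W-1)}$, where I used $\sum_{i\in S^\star}v_i^2\le V(W-1)$. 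After dividing by $W$, the stray factors combine as $\tfrac1W\cdot\sqrt{W}\cdot\sqrt{W-1}=\sqrt{(W-1)/W}\le 1$, so the leading term collapses to $\cO\big((1+V)\sqrt{K\log(N/\delta)/\min_{i\in S^\star}n_i}\big)$; the second-order term is handled analogously, with $\sum_{i\in S^\star}(1+v_i)\le K(1+V)$ producing the stated $\cO\big(K(1+V)\log(N/\delta)/\min_{i\in S^\star}n_i\big)$ remainder.

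I expect the main obstacle to be the second step --- establishing the $W$-dependent lower bound on $\tau_{i0}$ and then verifying that every surviving factor of $W$ cancels exactly across the variance term, the $\tau_{i0}$ substitution, and the division by $W$. This is precisely where the uniform structure is indispensable: the inequality $\Phi(S_k)\le\Phi^\star$ fails in the non-uniform setting (a historical assortment may load on high-attraction, low-reward items outside $S^\star$), which is what forces the weaker $n_i/(1+KV)$ bound there and accounts for the $\sqrt{K}$ statistical gap between the two regimes. A secondary point is keeping the variance proxy in the $v_i/(1+v_i)^2$ form throughout, so that exactly one power of $(1+v_i)$ remains after passing from $p$-space to $v$-space and inserting the $\tau_{i0}$ bound.
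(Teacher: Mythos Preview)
Your proposal is correct and follows essentially the same route as the paper: the key observation that uniform rewards force $\Phi(S_k)\le\Phi(S^\star)$ for every historical $S_k$, yielding the $W$-dependent lower bound on $\tau_{i0}$ (equivalently on $\sum_k\bm 1\{i\in S_k\}/(1+\Phi(S_k))$), followed by Cauchy--Schwarz on $\sum_{i\in S^\star}\sqrt{v_i(1+v_i)}$ to extract the $\sqrt{K}$ saving, is exactly the paper's argument. The only cosmetic difference is that the paper first packages the pessimism and concentration into a general intermediate bound (their Theorem~A.1) valid for arbitrary rewards and then specializes, whereas you merge these steps directly; the algebra and the cancellation of the $W$ factors are identical.
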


\begin{proof}[Proof of Theorem~\ref{thm-lower-bound-uniform}]
    Please see Appendix~\ref{subsec: proof upper bound uniform} for a detailed proof.
\end{proof}

Due to the additional uniform reward assumption, Theorem~\ref{thm-upper-bound-uniform} provides an improvement of order $\sqrt{K}$ over Theorem~\ref{thm-upper-bound-non-uniform}, disregarding lower-order terms. 
This upper bound is proven to be tight due to a corresponding lower bound that nearly matches it.

\begin{theorem}[Lower bound for Uniform Reward Setting]\label{thm-lower-bound-uniform}
For any sufficiently large integers $N,K$ such that $N\geq 5K$, when the reward vector $\bm{r} = \bm{1}_{N}$, there exists a set of observed $K$-sized assortments $D_S = \{S_k\}_{k=1}^n\subset [N]^n$ and a set of problem parameters $\mathcal{V}\subset [0,1]^N$. So that for any algorithm $\mathcal{A}$ that takes as input $D_S$ and $$D_{\bm{v}} := \{i \in S : S \in D_S, i \sim \mathbb{P}(\cdot \mid S, \bm{v}) \text{ independently}\},$$ if we denote $S^\star_{\bm{v}}$ the optimal assortment under $\bm{v}$, then under the reward parameter $\bm r$:
\begin{align*}
\max_{\bm{v} \in \mathcal{V}} \mathbb{E}_{\bm{v}} \left[  \frac{ R(S_{\bm{v}}^\star; \bm{v}) - R(\mathcal{A}(D_{\bm{v}}, D_S); \bm{v}) }{\sqrt{K/\min_{i \in S^\star_{\bm{v}}} n_i}} \right] \ge \Omega(1).
\end{align*}
\end{theorem}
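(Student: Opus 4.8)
The plan is to prove the matching $\sqrt{K/n_0}$ lower bound, where $n_0 := \min_{i\in S^\star_{\bm v}} n_i$, by an \emph{Assouad-type} reduction over a hypercube of $2^K$ hard instances, rather than a single two-point (Le Cam) argument. This choice is forced: if one flips a single block of items, the total information budget grows with $K$ and a short calculation shows the resulting gap is only of order $1/\sqrt{n_0}$, falling short of the target by exactly $\sqrt{K}$. The extra $\sqrt{K}$ is the statistical signature of aggregating $K$ asymptotically independent binary sub-decisions, which is precisely what Assouad's lemma captures and what the uniform-reward structure makes available.

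\textbf{Construction.} I would take $K$ disjoint ``twin'' pairs of active items $(a_\ell,b_\ell)_{\ell\in[K]}$ together with at least $K-1$ inert dummy items of attraction value $0$; since $N\ge 5K$ there is ample room. For $\omega\in\{0,1\}^K$ set $v^\omega_{a_\ell}=\theta(1+\epsilon)$, $v^\omega_{b_\ell}=\theta(1-\epsilon)$ when $\omega_\ell=0$, and swap the two values when $\omega_\ell=1$; the dummies keep value $0$. The observed assortment set $D_S$ is fixed across all $\omega$: each $S_k$ contains exactly one active item padded with $K-1$ dummies, and each active item is used in exactly $n_0$ of them. Because the dummies are never chosen, every observation collapses to an independent Bernoulli draw comparing one active item against the no-purchase option, and since both twins of every pair occur $n_0$ times regardless of $\omega$, one has $\min_{i\in S^\star_{\bm v^\omega}} n_i=n_0$ for all $\omega$, so the normalizer $\sqrt{K/n_0}$ is constant across the family.

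\textbf{Loss decomposition and information budget.} Under $\bm r=\bm 1$ we have $R(S;\bm v)=f(V_S)$ with $f(t)=t/(1+t)$ and $V_S=\sum_{i\in S}v_i$, so revenue maximization is attraction maximization and $S^\star_{\bm v^\omega}$ selects the higher-valued (``good'') twin of each pair. Writing $x_\ell,y_\ell$ for the indicators that the good, resp.\ bad, twin of pair $\ell$ lies in $S$, the cardinality constraint $\sum_\ell(x_\ell+y_\ell)\le K$ yields the exact identity $V^\star-V_S=\theta[(K-\sum_\ell x_\ell-\sum_\ell y_\ell)+\epsilon(K-\sum_\ell x_\ell+\sum_\ell y_\ell)]\ge \theta\epsilon\sum_\ell(1-x_\ell+y_\ell)$, and $1-x_\ell+y_\ell\ge \mathbf{1}\{(x_\ell,y_\ell)\neq(1,0)\}$. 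Decoding $\hat\omega_\ell$ from $S$ by ``which twin of pair $\ell$ is present,'' this indicator equals $\mathbf{1}\{\hat\omega_\ell\neq\omega_\ell\}$, so by the mean value theorem $\mathrm{SubOpt}(S;\bm v^\omega)\ge \frac{\theta\epsilon}{(1+V^\star)^2}\,d_H(\hat\omega,\omega)$, a constant multiple of the Hamming error. On the information side, flipping $\omega_\ell$ only alters the law of the $2n_0$ Bernoulli observations attached to pair $\ell$, whose per-coordinate KL divergence is of order $n_0\theta\epsilon^2$. Taking $\theta=\Theta(1/K)$ (so $V^\star=\Theta(1)$ and $(1+V^\star)^{-2}$ is a constant) and $\epsilon=\Theta(1/\sqrt{n_0\theta})=\Theta(\sqrt{K/n_0})$ keeps every per-coordinate KL below a fixed constant, hence every per-coordinate total variation below $1/2$ via Pinsker. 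Assouad's lemma then forces $\max_\omega\mathbb{E}_\omega[d_H(\hat\omega,\omega)]\ge \Omega(K)$, and multiplying by the per-coordinate weight $\theta\epsilon/(1+V^\star)^2=\Theta(1/\sqrt{Kn_0})$ gives $\max_\omega\mathbb{E}_\omega[\mathrm{SubOpt}]\ge \Omega(\sqrt{K/n_0})$, which is the claimed $\Omega(1)$ after dividing by $\sqrt{K/n_0}$.

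\textbf{Main obstacle.} The crux is the joint calibration of $\theta,\epsilon,n_0$: one must simultaneously keep each twin's value in $[0,1]$ and $V^\star$ bounded so the reward weight $(1+V^\star)^{-2}$ does not collapse, keep the per-coordinate information $n_0\theta\epsilon^2$ at the \emph{Assouad} threshold rather than the global two-point threshold (this is exactly where the $\sqrt{K}$ separation is won, since it permits an $\epsilon$ larger by a factor $\sqrt{K}$), and confirm that the loss decomposes additively despite the global cardinality constraint coupling the pairs, which is what the $(1-x_\ell+y_\ell)$ accounting secures. Landing all three at $\theta=\Theta(1/K)$, $\epsilon=\Theta(\sqrt{K/n_0})$ is the only nontrivial step; the remainder is bookkeeping analogous to the non-uniform lower bound.
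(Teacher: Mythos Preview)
Your proposal is correct and arrives at the same $\sqrt{K/n_0}$ rate, but the route differs from the paper's in two substantive ways. First, the paper uses Fano's inequality over an exponential-size packing $\mathcal{F}$ of $K$-subsets of $[4K]$ (the same packing as in the non-uniform proof, re-used for uniformity of presentation), whereas you use Assouad's lemma over the full hypercube $\{0,1\}^K$ of twin-pair configurations; this lets you bypass the packing-number lemma entirely and makes the per-coordinate accounting explicit. Second, the paper's ``filler'' items in $\mathcal{N}_0$ carry attraction $1/K$ (so each observation still spreads mass across $K+1$ outcomes), while your dummies have attraction $0$, collapsing every observation to a clean Bernoulli against the no-purchase option. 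Both choices yield a per-observation KL of order $1/n_0$ after calibration, and in fact your absolute perturbation $\theta\epsilon=\Theta(1/\sqrt{Kn_0})$ coincides with the paper's $\epsilon'$; the approaches are thus informationally equivalent but packaged differently. Your argument is arguably more elementary for the uniform case in isolation, while the paper's buys a unified treatment with the non-uniform theorem at the cost of the extra combinatorial lemma.

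Two small points to tighten when you write it out. Your assertion that $\mathbf{1}\{(x_\ell,y_\ell)\neq(1,0)\}$ \emph{equals} $\mathbf{1}\{\hat\omega_\ell\neq\omega_\ell\}$ is not quite right when both or neither twin is present; what you actually have (and all you need) is the inequality $(1-x_\ell+y_\ell)+(1-x'_\ell+y'_\ell)=2$ for neighboring vertices, which is exactly the Assouad separation condition with $\delta=\theta\epsilon/(1+V^\star)^2$. Also, your calibration $\epsilon=\Theta(\sqrt{K/n_0})$ silently requires $n_0\gtrsim K$ to keep $\epsilon<1$ and hence $v\ge 0$; this is harmless (the paper imposes the stronger $\sqrt{n_{\min}}\ge K$), but should be stated.
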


\begin{proof}[Proof of Theorem~\ref{thm-lower-bound-uniform}]
    Please see Appendix~\ref{appendix-proof-lower-bound-uniform} for a detailed proof.
\end{proof}

Based on Theorems~\ref{thm-upper-bound-non-uniform}-\ref{thm-lower-bound-uniform}, we have established statistical limits of $\Theta(K/\sqrt{\min_{i \in S^*} n_i})$ for the non-uniform reward setting and $\Theta(\sqrt{K/\min_{i \in S^*} n_i})$ for the uniform reward setting. An interesting phenomenon is the presence of a $\sqrt{K}$-statistical gap between these two settings. This gap is unique to the offline setting and does not appear in the online setting, where both uniform and non-uniform reward settings have the regret upper and lower bounds of $\sqrt{NT}$, where $N$ represents the number of products and $T$ represents the number of online iterations.

\section{Proof of Lower Bound Results}\label{sec-lower-bound-proof}

In this section, we provide the proof of our lower bound results. 
Here we mainly focus on the non-uniform reward setting (Theorem~\ref{thm-lower-bound-non-uniform}) and comment on the differences in the proofs for the uniform reward setting, leaving the details of the latter to Appendix~\ref{appendix-proof-lower-bound-uniform}.
Lemmas in this section are proved in Appendix~\ref{sec: proof lemma lower bound}.

For sufficiently large cardinality $K$, number of items $N$, and the effective sample size $\nmin$ with $N \geq 5K, \sqrt{\nmin}\geq K$, we construct a class of problem parameters $\mathcal{V} \subset \mathbb{R}_{+}^N \times \mathbb{R}^N_+$. Each $(\bm r,\bm{v}) \in \mathcal{V}$ pair corresponds to a combination of revenues and attraction values of $[N]$ items, we also denote $S^\star_{\bm r, \bm v}$ the optimal assortment under parameter $(\bm r, \bm v)$ and $\mathbb{Q}_{\bm v}(S)$ the corresponding distribution that generates the customer choice after given assortment $S$. Our goal is to show that there exists an observable assortment set $\{S_1,\dots,S_n\}$ so that 
\begin{enumerate}
    \item For any problem parameter $(\bm r, \bm v) \in \mathcal{V}$, it holds that $\min_{i\in S^\star_{\bm r, \bm v}}n_i \geq \nmin$.
    \item For any policy $\pi$ computing the approximate optimal assortment with input $D:=\{(i_1,S_1),\dots,(i_n,S_n)\},$ $i_k \sim \mathbb{Q}_{\bm v}(S_k)$ independently, it holds that \begin{align*}
    \max_{(\bm v, \bm r)\in \mathcal{V}}\E\big[ R(S^\star_{\bm r,\bm v};\bm r, \bm v) -   R(\pi(D);\bm r, \bm v) \big] \gtrsim \frac{K}{\sqrt{\nmin}},
\end{align*}
where the expectation is taken over the distribution of $D.$
\end{enumerate}

\subsection{Construction of Hard Problem Instances}\label{sec-construction}

In the following, we first introduce our construction of $\mathcal{V}$ and $\{S_1,\dots,S_n\}$, then show that the revenue gap lower bound of assortment learning problem can be reduced to lower bound the error rate of a hypothesis testing problem.

\paragraph{Construction of $\mathcal{V}$.} To explain our construction on $\mathcal{V}$, we divide the set of items $[N]$ into three distinct subsets:
\begin{enumerate}
    \item \textbf{Optimal items} $\mathcal{N}_{\text{opt}}$:  A $K$-sized subset of $[4K]$ such that each item in this subset has attraction value of $1/K + \epsilon$ and revenue of $1$, with $\epsilon <\frac{1}{4K}$ to be determined later.
    \item \textbf{Competitive items} $\mathcal{N}_{c} := [4K]\setminus \Nopt$: each item in this subset has an attraction value of $1/K$ and a revenue of $1$.
    \item \textbf{Significantly sub-optimal items} $\mathcal{N}_{0} := [N] \setminus [4K]$: each item in this subset has an attraction value of $1$ and a revenue of $0$.
\end{enumerate}
In the above construction, the optimal assortment is precisely the set consisting of all items in $\Nopt$. The set $\Nc$ contains items that are sub-optimal but have only a small gap of $\epsilon$ in attraction values compared to those in $\Nopt$. Lastly, $\Nsubopt$ contains items that have much larger attractive values than other items but are definitively sub-optimal due to their zero revenue. Each partition above corresponds to a pair of problem parameters $\bm v, \bm r$ with \begin{align*}
    \bm{v}_{\mathcal{N}_{opt}} = 1/K+\epsilon,   \bm{v}_{\mathcal{N}_{c}} = 1/K, \bm{v}_{\mathcal{N}_0} = 1,\quad \text{ and }\quad \bm{r}_{\mathcal{N}_{opt}} = 1,   \bm{r}_{\mathcal{N}_{c}} = 1, \bm{r}_{\mathcal{N}_0} = 0. 
\end{align*}

To define $\mathcal{V},$ we introduce the following packing number result under the distance 
\begin{align}\label{eq-dist-Delta}
\Delta(S,S'):= \lvert (S \cup S') \setminus (S\cap S') \rvert
\end{align}
defined over $\mathcal{S}_K:$
\begin{lemma}\label{lem-packing-number}
    There exists some absolute constants $C_0,C_1>0$ so that when $K>C_0$, there exists a collection $\mathcal{F}$ of $K$-sized subsets of $[4K]$ so that $\Delta(S,S') \geq \lfloor K/4 \rfloor ,  \forall S, S' \in \mathcal{F}$ and $\log \lvert \mathcal{F} \rvert \geq C_1 K.$
\end{lemma}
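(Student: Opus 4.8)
The plan is to prove Lemma~\ref{lem-packing-number} via a probabilistic argument (the Gilbert--Varshamov style bound) that is standard for packing in the Hamming metric. Observe first that the distance $\Delta(S,S') = |(S\cup S')\setminus(S\cap S')|$ is exactly the symmetric-difference (Hamming) distance between the indicator vectors of $S$ and $S'$ in $\{0,1\}^{4K}$, restricted to the ``slice'' of vectors with exactly $K$ ones. So the task is to find a large code inside this constant-weight slice of $\{0,1\}^{4K}$ with minimum Hamming distance at least $\lfloor K/4\rfloor$.

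My approach would be a greedy/volume argument. Take the set $\mathcal{S}_K$ of all $K$-subsets of $[4K]$, of size $\binom{4K}{K}$. Build $\mathcal{F}$ greedily: repeatedly pick any remaining subset, add it to $\mathcal{F}$, and delete all subsets within distance $\lfloor K/4\rfloor - 1$ of it. The number deleted at each step is at most the size of a Hamming ball of radius $K/4$ intersected with the slice, which is bounded by $\sum_{t \le K/8} \binom{K}{t}^2$ (choosing $t$ elements to drop and $t$ to add). Hence
\begin{align*}
    |\mathcal{F}| \;\ge\; \frac{\binom{4K}{K}}{\sum_{t\le K/8}\binom{K}{t}^2}.
\end{align*}
The main work is then to show this ratio is at least $e^{C_1 K}$, i.e.\ that $\log\binom{4K}{K}$ exceeds $\log\big(\sum_{t\le K/8}\binom{K}{t}^2\big)$ by a linear-in-$K$ margin. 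For the numerator, the standard entropy bound gives $\log\binom{4K}{K} = 4K\cdot H(1/4)\cdot(1+o(1))$ up to lower-order logarithmic corrections, where $H$ is the binary entropy in nats. For the denominator, each term satisfies $\binom{K}{t}^2 \le \binom{K}{t\le K/8}^2 \le e^{2K H(1/8)}$, and the $O(K)$ terms in the sum only contribute an additive $\log K$, so the denominator is at most $e^{2K H(1/8) + o(K)}$.

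The crux is therefore the numerical comparison of the two entropy exponents: I need $4H(1/4) > 2H(1/8)$ with a strictly positive gap. Using natural-log entropy, $4H(1/4) = 4(\tfrac14\ln 4 + \tfrac34\ln\tfrac43) \approx 2.25$ and $2H(1/8) = 2(\tfrac18\ln 8 + \tfrac78\ln\tfrac87) \approx 0.75$, so the gap is comfortably positive (roughly $1.5K$ in the exponent). This yields $\log|\mathcal{F}| \ge C_1 K$ for some absolute $C_1>0$ once $K$ exceeds a threshold $C_0$ that absorbs the $o(K)$ and $\log K$ corrections. I expect the main obstacle to be purely bookkeeping: carefully bounding the Hamming-ball volume inside the constant-weight slice (the radius is in symmetric-difference distance, so dropping $t$ and adding $t$ gives a shift of $2t$ in symmetric difference, meaning I should sum over $t \le \lfloor K/4\rfloor/2 = K/8$) and tracking the polynomial-in-$K$ factors so they can be swallowed by the exponential gap. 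No single step is deep; the argument is a clean covering/packing estimate, and the only thing to verify with care is that the chosen radius $\lfloor K/4\rfloor$ is small enough relative to the ambient dimension $4K$ that the volume bound still leaves an exponential surplus.
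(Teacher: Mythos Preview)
Your approach is essentially the same as the paper's: both use a greedy/Gilbert--Varshamov volume argument, bounding the size of a symmetric-difference ball in the constant-weight slice and comparing with $\binom{4K}{K}$ via Stirling/entropy. One slip to fix: the ball volume at radius $2t$ is $\binom{K}{t}\binom{3K}{t}$, not $\binom{K}{t}^2$, because the $t$ elements you add come from the complement $[4K]\setminus S$ of size $3K$, not $K$. With the correct count, the denominator's exponent becomes $K\,H(1/8)+3K\,H(1/24)\approx 0.90K$ rather than your $2K\,H(1/8)\approx 0.75K$; since the numerator's exponent $4K\,H(1/4)\approx 2.25K$ still dominates by a linear margin, the argument goes through unchanged. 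The paper carries out the same comparison with a direct Stirling computation in place of your cleaner entropy bookkeeping.
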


With the above $K/4$-packing collection $\mathcal{F}$ of $[4K]$, we define $\mathcal{V}$ as the set consisting of all $(\bm{v}, \bm{r})$ pairs induced by all possible partitions described above with $\Nopt \in \mathcal{F}$. Note that the partition is fully determined by the choice of $\mathcal{N}_\text{opt}$, and therefore, the size of $\mathcal{V}$ satisfies $\log\lvert \mathcal{V} \rvert =\log \lvert \mathcal{F} \rvert \geq C_1K$. Since there is a one-to-one correspondence between $(\bm r, \bm{v}) \in \mathcal{V}$ and the partition, we use the notation $\Nopt(\bm r, \bm{v})$, $\Nc(\bm r, \bm{v})$, and $\Nsubopt(\bm r, \bm{v})$ for $(\bm r, \bm{v}) \in \mathcal{V}$ to denote the optimal, competitive, and significantly sub-optimal item sets, respectively, under the parameters $\bm r, \bm{v}$.

\paragraph{Construction of $\{S_1,\dots,S_n\}$.} We select $n = 4K\nmin$ and let $$S_i = \{ \lceil i/\nmin\rceil  , 4K+1,\dots,5K-1 \},\quad \forall 1\leq i \leq n.$$
By our construction, for every $(\bm r, \bm v) \in \mathcal{V}$, each item in $\Nopt(\bm r, \bm v)\cup \Nc(\bm r, \bm v)$ is covered by $\{S_1,\dots,S_n\}$ exactly $\nmin$ times. In each observation, only one of these items is accompanied by $K-1$ significantly sub-optimal items. In the following proof, we  index each $S_{(\ell-1) \nmin + j}$ differently as $S_{j}^{(\ell)}$ for $\ell \in [4K]$ and $j \in [\nmin]$ to simplify our notation and emphasize that it is the $j$-th assortment containing item $\ell$.

\subsection{Applying Fano's Lemma}

Firstly, by our construction , we have the following lower bound result regarding the sub-optimality gap via cardinality-based set distance $\Delta(\cdot,\cdot),$ defined in \eqref{eq-dist-Delta}
\begin{lemma}\label{lem-lb-via-distance} For any $(\bm r, \bm v)\in \mathcal{V}$ and $S\subset \mathcal{S}_K,$ it holds that \begin{align*}
    \SubOpt(S;\bm r, \bm v) \geq  \frac{\epsilon \Delta(\Nopt(\bm r, \bm v) ,S)}{18}.
\end{align*}
\end{lemma}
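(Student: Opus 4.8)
The plan is to directly lower bound the revenue gap $\SubOpt(S;\bm r,\bm v) = R(\Nopt;\bm r,\bm v)-R(S;\bm r,\bm v)$ in terms of the cardinality distance $\Delta(\Nopt,S)$, exploiting the specific structure of the hard instances. Recall that under any $(\bm r,\bm v)\in\mathcal{V}$, the optimal items in $\Nopt$ have attraction $1/K+\epsilon$ and revenue $1$, the competitive items in $\Nc$ have attraction $1/K$ and revenue $1$, and the significantly sub-optimal items in $\Nsubopt$ have attraction $1$ but revenue $0$. The first step is to rewrite $R(S;\bm r,\bm v) = \bigl(\sum_{i\in S} r_i v_i\bigr)/\bigl(1+\sum_{i\in S} v_i\bigr)$ and track how swapping an optimal item for a non-optimal one changes both numerator and denominator. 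Note that replacing an item of $\Nopt$ by a competitive item of $\Nc$ decreases the numerator by exactly $\epsilon$ while barely changing the denominator, whereas replacing it by an item of $\Nsubopt$ loses revenue-weighted attraction in the numerator and simultaneously inflates the denominator (since $v=1$ is large), which is doubly harmful. So every unit of $\Delta(\Nopt,S)$ should cost at least order $\epsilon$ in revenue.

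Next I would make this quantitative. Write $a = \lvert S\cap\Nopt\rvert$, $b=\lvert S\cap\Nc\rvert$, $c=\lvert S\cap\Nsubopt\rvert$, so that $a+b+c=\lvert S\rvert\le K$ and $\Delta(\Nopt,S) = (K-a)+(K-a) = 2(K-a)$ when $\lvert S\rvert=K$ (more carefully $\Delta = \lvert\Nopt\setminus S\rvert+\lvert S\setminus\Nopt\rvert = (K-a)+(b+c)$). The numerator of $R(S;\bm r,\bm v)$ is $a(1/K+\epsilon)+b(1/K) = (a+b)/K + a\epsilon$, since $\Nsubopt$ items contribute zero revenue. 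The denominator is $1 + a(1/K+\epsilon)+b/K + c = 1+(a+b)/K+a\epsilon+c$. The optimal revenue $R(\Nopt;\bm r,\bm v)$ has numerator $1+K\epsilon$ and denominator $1+1+K\epsilon = 2+K\epsilon$. The plan is to bound the denominator of $R(S)$ from above by a constant (using $\epsilon<1/(4K)$, $a\le K$, and $c\le K$, all of which give denominator $O(1)$, specifically at most something like $3+c$), and bound the numerator deficit from below by a quantity proportional to $\epsilon\,\Delta(\Nopt,S)$. The key algebraic identity to establish is that $R(\Nopt)-R(S)$ is at least a constant times $[\epsilon(K-a) + (\text{contribution of }b,c)]$, and one then checks each of the two ways $S$ can differ from $\Nopt$ contributes at least order $\epsilon$ per unit of $\Delta$.

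The main obstacle I anticipate is handling the $\Nsubopt$ items carefully, because they affect the revenue gap through the denominator rather than through lost revenue, and the relationship is nonlinear. Including a large-attraction zero-revenue item does not remove revenue directly but dilutes the probability mass on the revenue-bearing items; I will need to show this dilution costs at least order $\epsilon$ per item to match the $b+c$ contribution to $\Delta$. Concretely, I would compare $R(S)$ with the revenue of the modified set that drops all $\Nsubopt$ items (which only increases revenue), reducing to the case $c=0$, and then handle the pure $\Nopt$-versus-$\Nc$ swap where each swap deterministically costs $\epsilon/\text{denominator}\ge \epsilon/O(1)$. The constant $18$ in the target bound is generous, so I expect the chief work is bookkeeping to show that after bounding all denominators by an absolute constant (at most around $9$ or so under the stated constraints), every unit of $\Delta(\Nopt,S)$ is charged at least $\epsilon/18$, and the claimed inequality follows by summing.
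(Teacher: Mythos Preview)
Your overall strategy is sound and can be pushed through, but the paper handles the $\Nsubopt$ items by a cleaner reduction that sidesteps exactly the bookkeeping you flag as the ``main obstacle.''

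Rather than \emph{dropping} the $\Nsubopt$ items, the paper \emph{replaces} each one by an unused item from $\Nc$. Such a replacement (i) can only increase $R(S)$, since the numerator $\sum_{i\in S} r_iv_i$ gains $1/K$ while the denominator loses $1-1/K$, and (ii) leaves $\Delta(\Nopt,S)$ exactly unchanged, because both the removed and the inserted item lie outside $\Nopt$. After this move one may assume $S\subset[4K]$ with $|S|=K$, whence $R(S)=\dfrac{1+a\epsilon}{2+a\epsilon}$ for $a=|S\cap\Nopt|$, and a one-line computation gives
\[
R(\Nopt)-R(S)=\frac{(K-a)\epsilon}{(2+K\epsilon)(2+a\epsilon)}\ \ge\ \frac{(K-a)\epsilon}{9}\ =\ \frac{\epsilon\,\Delta(\Nopt,S)}{18},
\]
using $K\epsilon\le 1$ and $\Delta(\Nopt,S)=2(K-a)$.

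Your dropping reduction, by contrast, shrinks both $|S|$ and $\Delta$ by $c$, so you must separately recover the lost $c$ units of $\Delta$. This is doable---for example, the size constraint $a+b+c\le K$ forces $(K-a)+b\ge c$, so the post-drop distance is at least half the original, and the extra $(K-m)/K$ term in the revenue gap absorbs the rest---but it is not delivered by the ``pure swap'' picture you sketch: after dropping, $|S'|=a+b$ may be strictly less than $K$, so the analysis is not literally a sequence of swaps, and your per-item dilution claim (each $\Nsubopt$ item costs at least order $\epsilon$ in revenue) fails in corner cases such as $a=b=0$. The replacement trick avoids all of this in one stroke.
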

Thus then our goal can be reduced to show \begin{align}\label{eq-reduce-to-distance-lb}
    \min_{\pi}\max_{(\bm r, \bm v) \in \mathcal{V}} \E[ \Delta (\Nopt(\bm r, \bm v),\pi (D))] \gtrsim \frac{K}{\epsilon\sqrt{\nmin}}.
\end{align}
Now we are ready to prove~\eqref{eq-reduce-to-distance-lb} via the Fano's lemma, a standard tool to study the information-theoretic limits \citep{yu1997assouad,tsybakov2009nonparametric,le2012asymptotic}:

\begin{lemma}\label{lem-fano} Let $ \Gamma:=  \{\theta^1,\dots, \theta^M\}\subset \Theta$ be a $2\delta$-separated set under some metric $\rho$ over $\Theta,$ each associated with a distribution $\mathbb{P}_{\theta}$ over some set $\mathcal{X},$ then it holds that \begin{align*}
    \min_{\pi} \max_{k\in [M]}\E_{D\sim P_{\theta_k}}[\rho(\pi(D),\theta_k) ] \geq \delta \cdot\left(1 - \frac{\frac{1}{M^2}\sum_{i,j = 1}^M D(\mathbb{P}_{\theta_i}\lVert \mathbb{P}_{\theta_j})  +\log 2}{\log M}\right).
\end{align*}
\end{lemma}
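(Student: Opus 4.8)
The plan is to run the classical estimation-to-testing reduction and then invoke the mutual-information form of Fano's inequality, controlling the information term by the pairwise KL divergences. First I would fix an arbitrary estimator $\pi$ and build from it a test $\psi(D):=\arg\min_{k\in[M]}\rho(\pi(D),\theta_k)$ that returns the index of the closest element of $\Gamma$ to the estimate, with ties broken arbitrarily. The key geometric observation is that the $2\delta$-separation together with the triangle inequality forces the estimator to be far off whenever the test errs: if the true parameter is $\theta_k$ but $\psi(D)=j\neq k$, then $\rho(\pi(D),\theta_j)\le\rho(\pi(D),\theta_k)$, and hence
\[
2\delta \le \rho(\theta_j,\theta_k)\le \rho(\pi(D),\theta_j)+\rho(\pi(D),\theta_k)\le 2\rho(\pi(D),\theta_k),
\]
so $\rho(\pi(D),\theta_k)\ge\delta$. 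Consequently the error event $\{\psi(D)\neq k\}$ is contained in $\{\rho(\pi(D),\theta_k)\ge\delta\}$.

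Second, I would pass to expected loss via Markov's inequality. For each $k$,
\[
\mathbb{E}_{D\sim\mathbb{P}_{\theta_k}}[\rho(\pi(D),\theta_k)] \ge \delta\,\mathbb{P}_{\theta_k}\big(\rho(\pi(D),\theta_k)\ge\delta\big) \ge \delta\,\mathbb{P}_{\theta_k}\big(\psi(D)\neq k\big).
\]
Taking the maximum over $k$ and lower bounding the maximum by the average yields $\max_k\mathbb{E}_{\theta_k}[\rho(\pi(D),\theta_k)]\ge \delta\cdot\frac{1}{M}\sum_{k=1}^M\mathbb{P}_{\theta_k}(\psi(D)\neq k)$, i.e.\ $\delta$ times the average testing error under a uniform prior over the $M$ hypotheses.

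Third, I would lower bound this average error by the standard Fano inequality. Placing a uniform prior on an index $V\in[M]$ with $D\mid\{V=k\}\sim\mathbb{P}_{\theta_k}$, Fano gives $\frac{1}{M}\sum_k\mathbb{P}_{\theta_k}(\psi(D)\neq k)\ge 1-\frac{I(V;D)+\log 2}{\log M}$, so it remains to control the mutual information. Writing $\bar{\mathbb{P}}:=\frac{1}{M}\sum_j\mathbb{P}_{\theta_j}$ for the mixture, the identity $I(V;D)=\frac{1}{M}\sum_i D(\mathbb{P}_{\theta_i}\,\|\,\bar{\mathbb{P}})$ combined with joint convexity of KL divergence in its second argument (equivalently, that the mixture minimizes the average KL) gives $D(\mathbb{P}_{\theta_i}\,\|\,\bar{\mathbb{P}})\le\frac{1}{M}\sum_j D(\mathbb{P}_{\theta_i}\,\|\,\mathbb{P}_{\theta_j})$, hence $I(V;D)\le\frac{1}{M^2}\sum_{i,j}D(\mathbb{P}_{\theta_i}\,\|\,\mathbb{P}_{\theta_j})$. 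Substituting into the chain above and taking the infimum over the arbitrary $\pi$ yields exactly the stated bound.

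I do not anticipate a genuine obstacle, since each ingredient is textbook; the only step requiring care is the geometric reduction, where the $2\delta$-separation must be combined with the triangle inequality in precisely the right direction so that the test error is dominated by the event $\{\rho(\pi(D),\theta_k)\ge\delta\}$ rather than a weaker event. The secondary point to state cleanly is the convexity bound on the mutual information, for which I would simply cite joint convexity of the KL divergence.
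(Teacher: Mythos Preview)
Your proposal is correct and follows the standard estimation-to-testing reduction combined with the mutual-information form of Fano's inequality and the convexity bound on $I(V;D)$. The paper itself does not supply a proof of this lemma; it states the result and cites it as a standard tool from \citet{yu1997assouad,tsybakov2009nonparametric,le2012asymptotic}, so your argument is precisely the textbook derivation those references contain.
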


If we let $\Theta = \mathcal{S}_K$, $\Gamma = \{\Nopt(\bm r, \bm v): (\bm r, \bm v) \in \mathcal{V}\}$, $\rho(\cdot,\cdot) = \Delta(\cdot,\cdot)$, then by definition $\Gamma$ is $2\delta_0$-separated with $\delta_0 = K/4$. Now denote 
$$\mathbb{P}_{\bm r, \bm v}: = \prod_{\ell=1}^{4K}\prod_{j = 1}^{\nmin} \mathbb{Q}_{\bm v}(S_{j}^{(\ell)}), $$ 
we have by $\log M = \log \lvert \mathcal{V}\rvert \geq C_1K,$ Lemma~\ref{lem-fano} implies that when $K\geq C_2:=\max\{C_0, C_1^{-1}2\log 2\} $ \begin{align*}
        \min_{\pi}\max_{(\bm r, \bm v) \in \mathcal{V}} \E[ \Delta (\Nopt(\bm r, \bm v),\pi (D))] \gtrsim  K\left(\frac{1}{2} - \frac{\sum_{(\bm r, \bm v)\in \mathcal{V}}\sum_{(\bm r', \bm v')\in \mathcal{V}} D(\mathbb{P}_{\bm r, \bm v} \lVert \mathbb{P}_{\bm r', \bm v'}) }{C_1 K\lvert \mathcal V \rvert^2} \right).
\end{align*}
And it remains to provide valid upper bound for the summation of KL divergence terms.

\subsection{Upper Bounding the KL-Divergence}

For any pairs $(\bm r, \bm v),(\bm r', \bm v') \in \mathcal{V},$ we have 

\begin{align*}
    D(\mathbb{P}_{\bm r, \bm v} \lVert \mathbb{P}_{\bm r', \bm v'}) &= D\left( \prod_{\ell=1}^{4K}\prod_{j = 1}^{\nmin} \mathbb{Q}_{\bm v}(S_{j}^{(\ell)}) \bigg\lVert \prod_{\ell=1}^{4K}\prod_{j = 1}^{\nmin} \mathbb{Q}_{\bm v'}(S_{j}^{(\ell)})  \right)= n_{\min}\sum_{\ell = 1}^{4K} D(\mathbb{Q}_{\bm v} (S^{(\ell)}_1) \rVert \mathbb{Q}_{\bm v'} (S^{(\ell)}_1)).
\end{align*}
and the following result on each $D(\mathbb{Q}_{\bm v} (S^{(\ell)}_1) \rVert \mathbb{Q}_{\bm v'} (S^{(\ell)}_1))$:
\begin{lemma}\label{lem-KL-new-upper-bound-non-uniform}
    For every $\ell,$ it holds that \begin{align*}
        D(\mathbb{Q}_{\bm v} (S^{(\ell)}_1) \rVert \mathbb{Q}_{\bm v'} (S^{(\ell)}_1)) \leq 
\begin{cases}
    5\epsilon^2,  & \text{if }  \ell \in (\Nopt(\bm r, \bm v) \cup \Nopt(\bm r', \bm v'))\setminus(\Nopt(\bm r, \bm v)\cap \Nopt(\bm r', \bm v')), \\
    0,& \text{otherwise.}
\end{cases} 
    \end{align*}
\end{lemma}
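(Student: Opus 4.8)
The plan is to exploit the very special structure of the assortment $S^{(\ell)}_1 = \{\ell, 4K+1,\dots,5K-1\}$: it contains exactly one item $\ell \in [4K]$, whose attraction value is the only quantity that can differ between $\bm v$ and $\bm v'$, together with $K-1$ significantly sub-optimal items, each of which carries attraction value $1$ under \emph{every} parameter in $\mathcal{V}$. First I would observe that $\mathbb{Q}_{\bm v}(S^{(\ell)}_1)$ depends on $\bm v$ only through $v_\ell$, which equals $1/K + \epsilon$ when $\ell \in \Nopt(\bm r,\bm v)$ and $1/K$ otherwise. Consequently, if $\ell$ lies in both optimal sets or in neither, the two attraction values agree, the two choice distributions are identical, and the KL divergence is exactly $0$. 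This settles the ``otherwise'' branch and confines the nontrivial case to $\ell$ lying in the symmetric difference $(\Nopt(\bm r,\bm v)\cup\Nopt(\bm r',\bm v'))\setminus(\Nopt(\bm r,\bm v)\cap\Nopt(\bm r',\bm v'))$.

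For the nontrivial branch, assume without loss of generality that $v_\ell = 1/K + \epsilon$ and $v'_\ell = 1/K$ (the reverse is symmetric). I would write the normalizing constants $D := 1 + \sum_{j\in S^{(\ell)}_1} v_j = K + 1/K + \epsilon$ and $D' := K + 1/K$, so that $D = D' + \epsilon$. Since the numerators of all choice probabilities except that of item $\ell$ coincide across the two parameters, a direct expansion collapses the KL divergence into
\[
D\big(\mathbb{Q}_{\bm v}(S^{(\ell)}_1) \;\big\|\; \mathbb{Q}_{\bm v'}(S^{(\ell)}_1)\big) = \log\frac{D'}{D} + p_\ell \log(1 + K\epsilon),
\]
where $p_\ell = (1/K + \epsilon)/D$ and I have used $\epsilon/(1/K) = K\epsilon$ to simplify the ratio $p_\ell/p'_\ell$.

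The hard part is that bounding the two summands separately via $\log(1+x)\le x$ yields only an $O(\epsilon)$ estimate, whereas the claim demands $O(\epsilon^2)$; the gain hinges entirely on an exact cancellation of the first-order terms. This cancellation is a direct consequence of the calibrated choice $v = 1/K$: the leading part of $p_\ell \log(1+K\epsilon)$ is $\tfrac{1}{KD'}\cdot K\epsilon = \epsilon/D'$, which precisely offsets the leading $-\epsilon/D'$ coming from $\log(D'/D) = -\log(1 + \epsilon/D')$. I would make this rigorous by applying $-\log(1+t) \le -t + t^2/2$ with $t = \epsilon/D'$ together with $\log(1+K\epsilon)\le K\epsilon$, then combining $-\epsilon/D'$ with $p_\ell K\epsilon = (\epsilon + K\epsilon^2)/D$ over the common denominator $D'D$. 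Using $D = D'+\epsilon$ the numerator reduces to $\epsilon^2(KD' - 1)$, and since $KD' - 1 = K(K+1/K)-1 = K^2$ while $D,D'\ge K$, the combined term is at most $K^2\epsilon^2/(D'D)\le \epsilon^2$; the residual $t^2/2 \le \epsilon^2/(2K^2)$ is negligible. The symmetric case, in which $\log(1+K\epsilon)$ appears with a negative sign, is handled identically using the matching lower bound $\log(1+K\epsilon)\ge K\epsilon - (K\epsilon)^2/2$. In both cases the total stays comfortably below $5\epsilon^2$, which establishes the lemma.
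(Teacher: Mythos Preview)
Your argument is correct, but it follows a genuinely different route from the paper. The paper first upper-bounds the KL divergence by the $\chi^2$-divergence,
\[
D\big(\mathbb{Q}_{\bm v}(S^{(\ell)}_1)\,\big\|\,\mathbb{Q}_{\bm v'}(S^{(\ell)}_1)\big)\;\le\;\sum_{j\in (S^{(\ell)}_1)_+}\frac{(p_j-q_j)^2}{q_j},
\]
and then bounds each term separately: the $K$ terms with $j\neq\ell$ contribute at most $2\epsilon^2/K^3$ each, while the $j=\ell$ term contributes at most $4\epsilon^2$, giving the total $5\epsilon^2$. The quadratic nature of the bound is therefore immediate, and no first-order cancellation needs to be tracked. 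Your approach instead collapses the exact KL into $\log(D'/D)+p_\ell\log(1+K\epsilon)$ and then carefully matches the leading $\epsilon$-terms using second-order Taylor bounds on the logarithm; this is more hands-on but yields sharper constants (roughly $\epsilon^2$ rather than $5\epsilon^2$) and makes explicit \emph{why} the construction with attraction value $1/K$ produces the cancellation. Either method is perfectly adequate here; the paper's $\chi^2$ route is shorter and avoids the case split between the two orientations of the symmetric difference, while yours gives a bit more structural insight.
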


Lemma~\ref{lem-KL-new-upper-bound-non-uniform} indicates that with the selection $\epsilon =\sqrt{\frac{C_1} {20\nmin}},$ we have $\max_{(\bm r, \bm v),(\bm r', \bm v')} D(\mathbb{P}_{\bm r, \bm v} \lVert \mathbb{P}_{\bm r', \bm v'})\leq C_1K/4,$ which then leads to

\begin{align*}
  K\left(\frac{1}{2} - \frac{\sum_{(\bm r, \bm v)\in \mathcal{V}}\sum_{(\bm r', \bm v')\in \mathcal{V}} D(\mathbb{P}_{\bm r, \bm v} \lVert \mathbb{P}_{\bm r', \bm v'}) }{C_1 K\lvert \mathcal V \rvert^2} \right)\geq \frac{K}{4},
\end{align*}
that finishes the proof of \eqref{eq-reduce-to-distance-lb}, which then implies Theorem~\ref{thm-lower-bound-non-uniform}.

\subsection{Additional Remarks and Discussion on Uniform-Reward Setting}
\paragraph{Comparison to the Lower Bound Proof in Online MNL Setting.} In the construction of the hard instance, our selection of problem parameters for the optimal item set and competitive item sets is inspired by \citet{chen2018note} for online MNL regret lower bounds. On the other hand, the construction of $\Nsubopt$ is new and specifically designed for the offline setting to achieve the sharp dependency on $K$. 

To explain the role of $\Nsubopt$, in our construction of observed assortment sets, every item in the optimal set or competitive set is observed alongside other items in $\Nsubopt$. Notably, items in $\Nsubopt$ are assigned large attraction values, making it rare to observe choices involving items from $\Nopt$ or $\Nc$ for any $S_i$. This rarity, in turn, increases the difficulty for the learner to distinguish the optimal items in $\Nopt$ from those in $\Nc$.

\paragraph{On the Proof of Uniform-Reward Setting.} Our framework developed in this section still works for the uniform-reward setting, where the revenue of all items are equal. However, several modifications to the instance construction and recalculation of the KL upper bounds in Lemma~\ref{lem-KL-new-upper-bound-non-uniform} are required. The main difference on the instance construction is that the $\bm r, \bm v$ assigned for items in $\Nsubopt$ are not valid. More precisely, when requiring all $r$ to be equal, the optimal assortment is now given by the items with the top-$K$ attraction values. Consequently, it is no longer possible to assign sub-optimal items higher attraction values than the optimal ones. In the construction of this setting, we set $\bm r_{\Nsubopt}$ to $1$ and $\bm v_{\Nsubopt}$ to $1/K$. This modification, in particular, substantially increases the probability of observing a choice for an item in $\Nopt \cup \Nc$ within the dataset, thereby making it easier to distinguish between items in $\Nopt$ and $\Nc$. This difference then allows for a $\sqrt{K}$ improvement in the uniform-reward setting over the non-uniform result. We leave the detailed proof to Appendix~\ref{appendix-proof-lower-bound-uniform}.

\section{Experiments}

In this section, we perform several numerical studies to illustrate the empirical performance of the proposed PRB algorithm.

\subsection{Baselines and Computational Cost Analysis}
 We
compare the \textbf{PRB} algorithm with the \textbf{PASTA} algorithm \citep{dong2023pasta} and a non-pessimistic maximum likelihood estimator (\textbf{MLE}) baseline. 
%using randomly generated synthetic datasets. We conduct experiments varying $\min_{i \in S^\star} n_i$ with a fixed $K$, and varying $K$ with a fixed $\min_{i \in S^\star} n_i^\star$, to show how the empirical performance depends on $\min_{i \in S^\star} n_i$ and $K$. Before comparing the empirical statistical complexity results, we first highlight the computational complexity comparison between the PRB algorithm, MLE, and PASTA. 
Since all methods rely on first estimating the attraction parameter $\widehat{v}$ and then taking the revenue maximizer under $\widehat{v},$ we only compare the complexity of obtaining the attraction parameter estimator $\widehat{v}$ here. For the PRB algorithm, it can be observed from the algorithm design that $\cO(n)$ additions of $K$-dimensional vectors are required to compute $\tau_{0j}$ and $\tau_j$ for all $j \in [N]$, followed by $\cO(N)$ operations to compute the final pessimistic value estimator. 

On the other hand, the MLE-based algorithm, whose value estimator is given by
\begin{align*}
\widehat{v} = \exp(\widehat{\theta}), \quad \widehat{\theta} = \text{argmax}_{\theta \in \mathbb{R}^d} \underbrace{\sum_{t = 1}^T \sum_{j \in (S_t)_+} \bm{1}\{i_t = j\} \log \left(\frac{e^{\theta_j}}{1 + \sum_{k \in S_t} e^{\theta_k}} \right)}_{:= \ell(\theta)},
\end{align*}
requires iterative optimization methods to maximize the objective function $\ell(\theta)$. Here, we compare with the first-order methods, which have the lowest computational cost per iteration. Specifically, at each iteration, computing the gradient incurs a cost of $\cO(nK)$, while updating the parameter $\theta$ requires $\cO(N)$ operations. Consequently, the total computational cost of PRB is comparable to the cost of a single iteration of the gradient-based optimization method used to solve MLE. Finally, the PASTA algorithm has even higher computational complexity compared with MLE-based algorithm since it requires the MLE result as a initial value estimate. In summary, the PRB algorithm is highly computationally efficient, requiring only as much cost as a single iteration of an MLE-based algorithm or PASTA. This efficiency allows PRB to scale seamlessly to large item size $N$ and sample size $n$.

\subsection{Sample Efficiency of PRB}

We examine the performance of different baselines using randomly generated synthetic datasets. We conduct experiments varying $\min_{i \in S^\star} n_i$ with a fixed $K$, and varying $K$ with a fixed $\min_{i \in S^\star} n_i^\star$, to show how the empirical performance depends on $\min_{i \in S^\star} n_i$ and $K$.

\subsubsection{Dependency on  $\min_{i\in S^\star}n_i$}\label{sec-experiment-partial-coverage}

\begin{figure}[htbp]
    \centering
    \subfloat[$\epsilon = 0.1/\sqrt{K}$]{
        \includegraphics[width=0.45\textwidth]{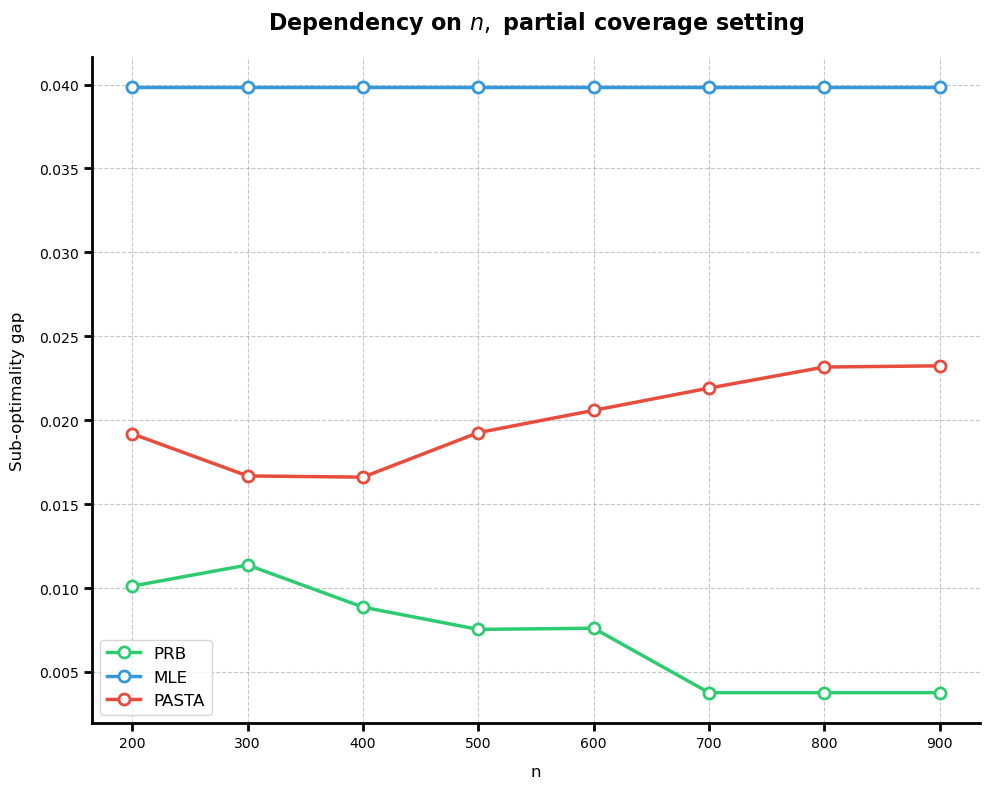}
        \label{fig:1a}
    }
    \hfill  % Add space between figures
    \subfloat[$\epsilon = 1/\sqrt{nK}$]{
        \includegraphics[width=0.45\textwidth]
        {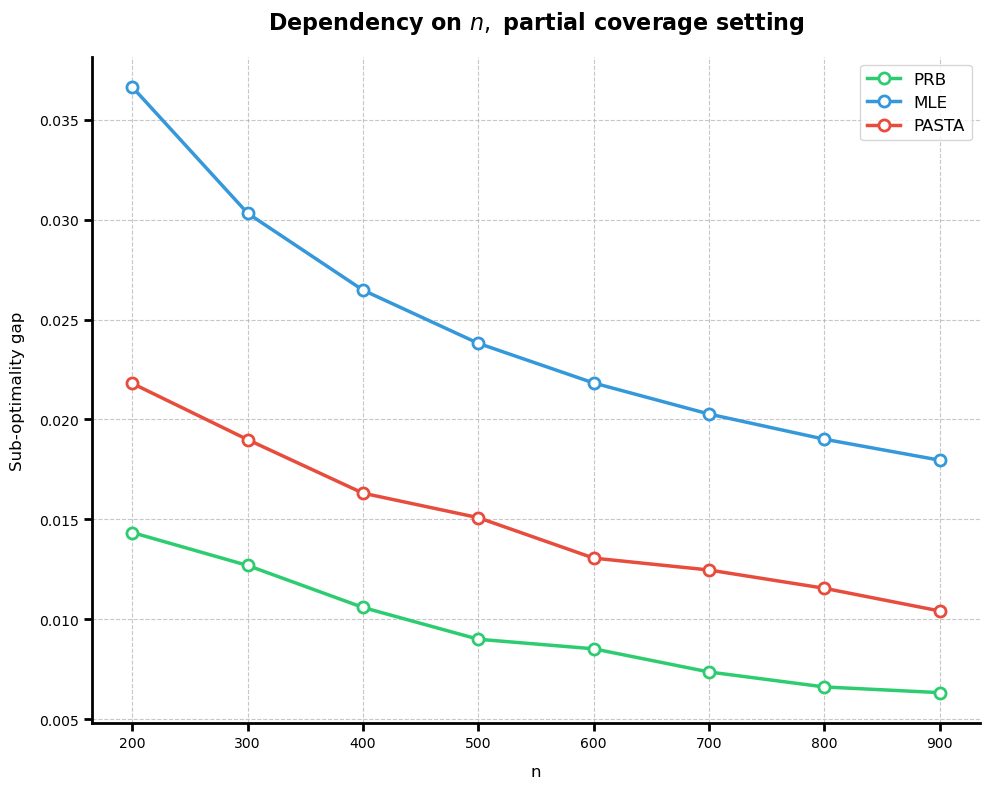}
        \label{fig:1b}
    }
    \caption{Comparison of PRB, PASTA, and MLE in different $\epsilon$ regime}
    \label{fig:dependency-on-n}
\end{figure}

\paragraph{Experimental Setup} In the experiment, we fix $N = 200$ and $K = 5,$ given an effective dataset size $n$, we construct problem parameters and observed datasets as the following: We set $v_0 = 1$ and \begin{align*}
  r_i \equiv 1, \quad \forall i\in[N],  \qquad  \quad  v_i = \begin{cases}
        \frac{1}{K}+ \epsilon, & 1\leq i\leq K,\\
        \frac{1}{K}, & i>K,\\
    \end{cases}
\end{align*}
with $\epsilon=\frac{1}{10\sqrt{Kn}}$. To explain our construction on the observed dataset, for each $1\leq \ell \leq 4K$ and $1\leq j\leq n,$ we denote $$S_{\ell}^{(j)} = \{\ell, m_{1}^{(\ell)},\dots,m_{K-1}^{(\ell)}\}$$
the assortment generated by drawing $\{m_1^{(\ell)},\dots, m_{K-1}^{(\ell)}\}$ uniformly from all possible $K-1$ sized subsets of $[N]\setminus [4K].$ Given $S_{\ell}^{(j)}$, we let  $i_{\ell}^{(j)}$ be the choice sampled from the MNL choice model with $\bm v$ condition on $S_{\ell}^{(j)}$. With this notation, we let the observed dataset $D_{\text{partial}} = \{i_\ell^{(j)},S_{\ell}^{(j)}\}_{j = 1,\ell = 1}^{n,2K}$. It can be seen from our construction that the optimal assortment is given by $S^\star = \{1,\dots,K\}$ and every item in $S^\star$ is covered by the observed assortments exactly for $n$ times. 

\paragraph{Results.} Figure~\ref{fig:dependency-on-n} summarizes the performance of different algorithms in the constant $\epsilon$ regime ($\epsilon = 0.05$, Figure~\ref{fig:1a}) and the small $\epsilon$ regime ($\epsilon = 0.1/\sqrt{n}$, Figure~\ref{fig:1b}). Both results are obtained by running 50 independent simulations and calculating the mean sub-optimality gap. It can be observed that the PRB algorithm outperforms the other two algorithms in both scenarios. In the constant $\epsilon$ regime, the MLE-based algorithm (non-pessimistic) fails to converge as $n$ increases due to partial coverage. The performance of both PASTA is significantly better than that of the non-pessimistic MLE, but not converge as $n$ increases as well.
In the small $\epsilon$ regime, where the choice of $\epsilon$ aligns with our minimax lower bound construction, the results show that PRB continues to outperform the other two algorithms. Notably, the sub-optimality gaps of all algorithms decrease as $n$ increases, which is expected since the gap itself decays with $n$ in this setting.
Finally, it is worth to note that the existing theoretical guarantees for PASTA require observations of the optimal assortment size $n_{S^\star}$ \citep{dong2023pasta}, our simulations show that PASTA still highly outperforms the non-pessimistic baseline even when $n_{S^\star} = 0$. This suggests that the proposed effective number, $\min_{i \in S^\star} n_i$, may be the true complexity measure for PASTA as well. These results open a promising direction for future research on providing sharper offline performance guarantees for PASTA and other MLE-based pessimistic algorithms.

\subsubsection{Dependency on $K$ in uniform and non-uniform reward setting}

\begin{figure}[htbp]
    \centering
    \subfloat[Non-uniform reward setting]{
        \includegraphics[width=0.45\textwidth]{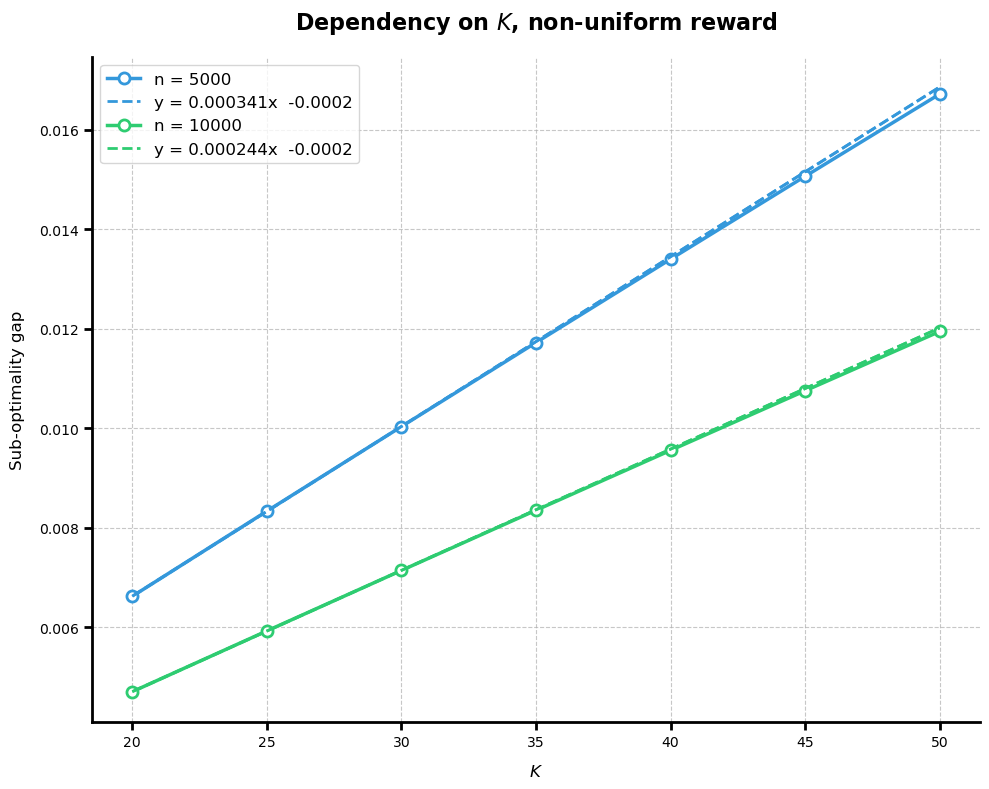}
        \label{fig:2a}
    }
    \hfill  % Add space between figures
    \subfloat[Uniform reward setting]{
        \includegraphics[width=0.45\textwidth]{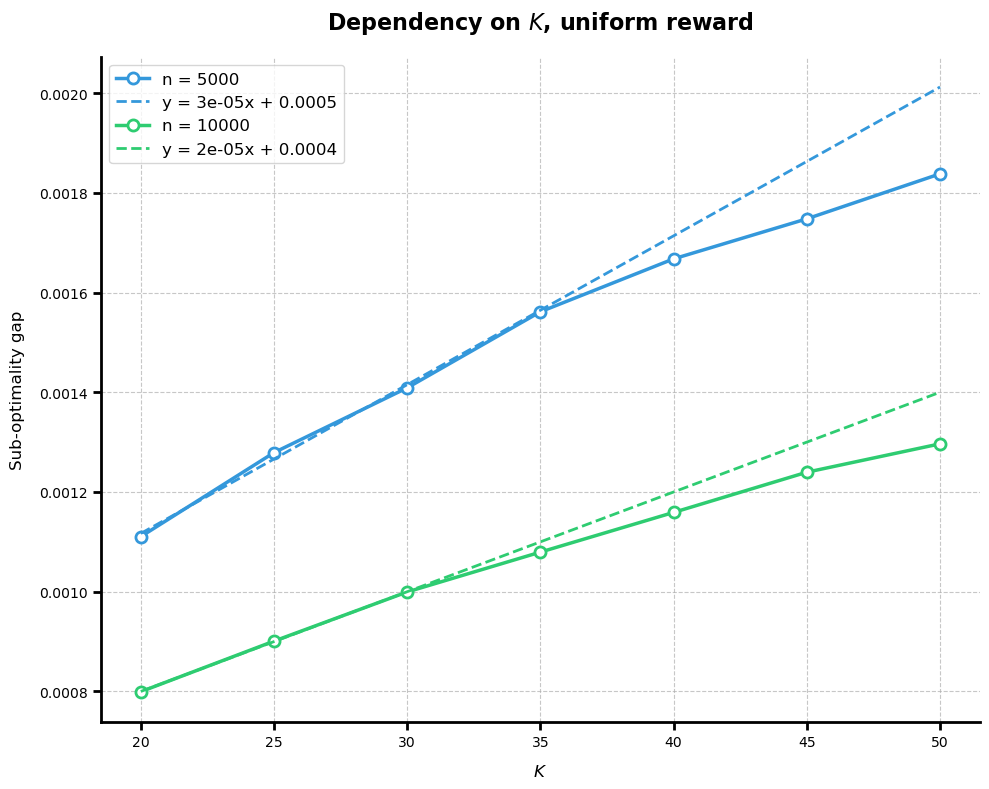}
        \label{fig:2b}
    }
    \caption{Sub-optimality gap of PRB under different $K$}
    \label{fig:dependency-on-K}
\end{figure}

In this section, we fix $N$ and vary $K$ to illustrate the dependency of empirical performance on $K$ in both the uniform reward and non-uniform reward settings. Specifically, we use the construction from our lower bound proof to approximate the worst-case behavior with respect to $K$ in both cases. Due to the high computational cost of MLE and PASTA in the large $K$ and large $n$ regimes (as a sufficiently large $n$ is required in the large $K$ regime to keep the sample complexity guarantee), we limit this experiment to the PRB algorithm to demonstrate the tightness of our lower bound.

\paragraph{Experimental Setup.} In both settings, we fix $N = 200$ and changing $K$ over the set $\{5,10,20,30,40,50\}$ with $n = 5000$ and $n = 10,000$. For each $K$, we use the similar construction of the observed dataset $\{i_{\ell}^{(j)},S_{\ell}^{(j)}\}_{1\leq \ell \leq 4K, 1\leq j\leq n}$ as in section~\ref{sec-experiment-partial-coverage}. For problem parameters $\bm v$ and $\bm r$, we set $v_0 = 1$ and \begin{align*}
  r_i \equiv 1, \quad \forall i\in[N],  \qquad  \quad  v_i = \begin{cases}
        \frac{1}{K}+ \epsilon, & 1\leq i\leq K,\\
        \frac{1}{K}, & i>K,\\
    \end{cases}
\end{align*}
in uniform reward setting with $\epsilon=\frac{1}{10\sqrt{Kn}}$. And $v_0 = 1$
\begin{align*}
  r_i =\begin{cases}
      1, & 1\leq i\leq 4K\\
      0, & i>4K
  \end{cases}, \qquad  \quad  v_i = \begin{cases}
        \frac{1}{K}+ \epsilon', & 1\leq i\leq K,\\
        \frac{1}{K}, & 1<i\leq 4K,\\
        1, & i>4K.
    \end{cases}
\end{align*}
in non-uniform reward setting for $\epsilon' = \frac{1}{10\sqrt{n}}$. 

\paragraph{Results.} Figure~\ref{fig:dependency-on-K} presents the empirical results of PRB in both settings. To determine whether the growth rate with respect to $K$ is linear, we plot the best linear fit for each curve using a dotted line. The linear fit is computed based on the first half of the curve. For linear growth, the curve is expected to align with the linear fit, whereas for sub-linear growth, the curve will fall below it. The results indicate that the sub-optimality gap grows with $K$ at a sub-linear rate in the uniform reward setting, while in the non-uniform reward setting, it grows at a linear rate, consistent with our theoretical minimax rate.

\section{Conclusion}

In this paper, we tackle the challenging problem of offline assortment optimization under the Multinomial Logit (MNL) model. We propose a novel Pessimistic Rank-Breaking (PRB) algorithm under this context, which finds a near-optimal assortment under the optimal item coverage condition --- a much weaker assumption than the optimal assortment coverage required by previous work. Furthermore, by establishing minimal lower bounds, we demonstrate that PRB achieves optimal sample complexity in both uniform and non-uniform reward settings. Through extensive empirical evaluation, we have shown that PRB outperforms existing methods in practical applications, particularly in scenarios with limited observational data. These findings represent a significant advancement in the field of assortment optimization, offering both theoretical insights and practical tools for real-world implementation.

\paragraph{Acknowledgments.}
This work is generously supported by the National Science Foundation under the grants CCF-2106508, CCF-2312204 and CCF-2312205.

\bibliography{reference.bib}

\bibliographystyle{ims}

\newpage 

\appendix

\section{Proof of Upper Bound Results}

We first present a general result and derive both Theorem~\ref{thm-upper-bound-non-uniform} and Theorem~\ref{thm-upper-bound-uniform} from this result.

\begin{theorem}[Sub-Optimality Gap for Algorithm~\ref{alg-rank-breaking}]\label{thm-upper-bound} Given \begin{align*}
\sum_{k =1}^n \frac{\bm{1}\{i\in S_k\}}{1+\sum_{j\in S_k }v_j} \geq 256\log(N/\delta),\quad \forall i\in S^\star,
\end{align*}
we have then with probability at least $1-2\delta$, it holds that
\begin{align*}
        &\mathrm{SubOpt}( \widehat{S}; \bm{v})  \\
        &\qquad \le \sum_{i\in S^\star} \frac{r_i}{1+\sum_{i\in S^\star} v_i} \cdot \left(\sqrt{\frac{4v_i(1+v_i)\log (N/\delta) }{\sum_{k = 1}^n \bm{1}\{i\in S_k\}(1+\sum_{j\in S_k}v_j)^{-1}}}  + \frac{48\log(N/\delta) (1+v_i)}{\sum_{k = 1}^n \bm{1}\{i\in S_k\}(1+\sum_{j\in S_k}v_j)^{-1}} \right).
\end{align*}
\end{theorem}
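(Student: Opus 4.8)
The plan is to combine the pessimism reduction of Lemma~\ref{lem-SubOptGap-via-pessmistic} with a per-item concentration analysis of the rank-breaking estimator. First I would invoke Proposition~\ref{prop-vLCB} to establish, on an event $\mathcal{E}_1$ of probability at least $1-\delta$, that the lower confidence bounds are genuinely pessimistic, $p_i^{\text{LCB}} \le p_i$ for all $i\in[N]$, together with the matching one-sided estimate $p_i - p_i^{\text{LCB}} \lesssim \sqrt{p_i(1-p_i)\log(N/\delta)/\tau_{i0}} + \log(N/\delta)/\tau_{i0}$, where the $\log(N/\delta)$ rather than $\log(1/\delta)$ reflects a union bound over the $N$ items. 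Since $p\mapsto p/(1-p)$ is increasing, pessimism transfers to $v_i^{\text{LCB}}\le v_i$, so Lemma~\ref{lem-SubOptGap-via-pessmistic} applies with $\underline{\bm v}=\bm v^{\text{LCB}}$ and $\underline S=\widehat S$, reducing the target to
\begin{align*}
\mathrm{SubOpt}(\widehat S;\bm v)\le \sum_{i\in S^\star}\frac{r_i\,(v_i-v_i^{\text{LCB}})}{1+\sum_{j\in S^\star}v_j}.
\end{align*}
It then remains to control $v_i-v_i^{\text{LCB}}$ for each $i\in S^\star$.

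Next I would transfer the error from probability space to attraction-parameter space through the exact identity $v_i-v_i^{\text{LCB}}=(p_i-p_i^{\text{LCB}})/\big[(1-p_i)(1-p_i^{\text{LCB}})\big]$. Pessimism gives $1-p_i^{\text{LCB}}\ge 1-p_i=(1+v_i)^{-1}$, so the denominator is bounded below by $(1+v_i)^{-2}$, yielding $v_i-v_i^{\text{LCB}}\le (1+v_i)^2(p_i-p_i^{\text{LCB}})$. Substituting the MNL identities $1-p_i=(1+v_i)^{-1}$ and $p_i(1-p_i)=v_i/(1+v_i)^2$ into the concentration bound expresses the width entirely in terms of $v_i$ and the random effective count $\tau_{i0}$.

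The remaining and most delicate step is to replace $\tau_{i0}$ by the deterministic coverage quantity $M_i:=\sum_{k=1}^n \bm 1\{i\in S_k\}\,(1+\sum_{j\in S_k}v_j)^{-1}$ appearing in the statement. I would first compute the conditional mean $\mathbb{E}[\tau_{i0}]=(1+v_i)M_i$, using that given $S_k\ni i$ the customer selects from $\{0,i\}$ with probability $(1+v_i)/(1+\sum_{j\in S_k}v_j)$, and then apply a multiplicative Chernoff bound to obtain $\tau_{i0}\ge \tfrac12(1+v_i)M_i$ simultaneously for all $i\in S^\star$ on a second event $\mathcal{E}_2$ of probability at least $1-\delta$. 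The hypothesis $M_i\ge 256\log(N/\delta)$ is exactly what makes $\mathbb{E}[\tau_{i0}]$ large enough to guarantee this lower bound after the union bound. The main obstacle is the care needed in nesting these two random events: the Bernstein-type concentration of $\widehat p_i$ is naturally conditional on the realized $\tau_{i0}$, whereas the Chernoff bound controls $\tau_{i0}$ itself, so I would arrange the argument so that both conclusions hold on $\mathcal{E}_1\cap\mathcal{E}_2$, giving the overall probability $1-2\delta$.

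Finally, on $\mathcal{E}_1\cap\mathcal{E}_2$, substituting $\tau_{i0}\ge\tfrac12(1+v_i)M_i$ into the transferred bound and simplifying gives
$$v_i-v_i^{\text{LCB}}\le \sqrt{\frac{4\,v_i(1+v_i)\log(N/\delta)}{M_i}}+\frac{48\,(1+v_i)\log(N/\delta)}{M_i},$$
where the constants $4$ and $48$ emerge from the factor $\tfrac12$ in the Chernoff step combined with the numerical constants in the empirical-Bernstein confidence width. Plugging this into the reduced sub-optimality expression yields the claimed bound.
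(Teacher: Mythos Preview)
Your proposal is correct and follows the same approach as the paper: invoke Proposition~\ref{prop-vLCB} for pessimism and per-item concentration, apply Lemma~\ref{lem-SubOptGap-via-pessmistic} for the reduction to $\sum_{i\in S^\star} r_i(v_i-v_i^{\text{LCB}})/(1+\sum_{j\in S^\star}v_j)$, and then lower bound the random count $\tau_{i0}$ by half its predictable mean $(1+v_i)M_i$. The one technical point worth flagging is that the paper uses Freedman's martingale inequality rather than a multiplicative Chernoff bound for the $\tau_{i0}$ step, because the problem formulation allows the assortments $S_k$ to be adaptively chosen (so $M_i$ is itself random and the summands form a martingale-difference sequence rather than independent Bernoullis); your Chernoff argument implicitly treats the $S_k$ as fixed, but swapping in the martingale version is routine and yields the same $\tau_{i0}\ge \tfrac12(1+v_i)M_i$ conclusion under the stated coverage hypothesis.
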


\begin{proof}[Proof of Theorem~\ref{thm-upper-bound}]
    Please see Appendix~\ref{subsec: proof upper bound} for a detailed proof.
\end{proof}

\subsection{Proof of Theorem~\ref{thm-upper-bound-non-uniform}}\label{subsec: proof upper bound non uniform}

\proof[Proof of Theorem~\ref{thm-upper-bound-non-uniform}]
For the first term in Theorem~\ref{thm-upper-bound}, we have \begin{align*}
&\sum_{i\in S^\star} \frac{r_i}{1+\sum_{i\in S^\star} v_i} \sqrt{\frac{4v_i(1+V)\log (N/\delta) }{\sum_{k = 1}^n \bm{1}\{i\in S_k\}(1+\sum_{j\in S_k}v_j)^{-1}}} \\
& \qquad\leq \sqrt{4(1+V)(1+KV)\log(N/\delta)} \cdot \sum_{i\in S^\star} \frac{\sqrt{v_i/n_i}}{1+\sum_{j\in S^\star} v_j}\\
&\qquad \leq \sqrt{\frac{4(1+V)(1+KV)\log(N/\delta)}{1+\sum_{j\in S^\star} v_j}}  \cdot\sqrt{\sum_{i\in S^\star}\frac{1}{n_i}}\\
&\qquad \leq (1+V)K\cdot\sqrt{\frac{4\log (N/\delta)}{\min_{i\in S^\star}n_i}},
\end{align*}
where the first inequality uses the size constraint and that $v_i\leq V$ and $r_i\leq 1$, the second inequality applies Cauchy-Schwartz inequality, and the last inequality also uses the size constraint.
For the second term, we have \begin{align*}
\sum_{i\in S^\star} \frac{r_i}{1+\sum_{i\in S^\star} v_i}\cdot  \frac{48(1+v_i)\log (N/\delta) }{\sum_{k = 1}^n \bm{1}\{i\in S_k\}(1+\sum_{j\in S_k}v_j)^{-1}} & \leq  \frac{48K^2(1+V)^2\log(N/\delta)}{\min_{i\in S^\star} n_i}.
\end{align*}
This finishes the proof.
\endproof

\subsection{Proof of Theorem~\ref{thm-upper-bound-uniform}}\label{subsec: proof upper bound uniform}

\proof[Proof of Theorem~\ref{thm-upper-bound-uniform}]
In the uniform reward setting, we always have \begin{align}
    \sum_{i\in S^\star} v_i \geq \sum_{i\in S} v_i,\quad \forall S \text{ with }\lvert S \rvert \leq K.\label{eq: uniform condition}
\end{align}
Consequently, the first term in Theorem~\ref{thm-upper-bound} can be simplified to
\allowdisplaybreaks
\begin{align*}
  &\sum_{i\in S^\star} \frac{r_j}{1+\sum_{i\in S^\star} v_i}\cdot \sqrt{\frac{4v_i(1+v_i)\log (N/\delta) }{\sum_{k = 1}^n \bm{1}\{i\in S_k\}(1+\sum_{j\in S_k}v_j)^{-1}}}\\
  &\qquad \leq   \sum_{i\in S^\star} \frac{r_j}{1+\sum_{i\in S^\star} v_i} \cdot\sqrt{\frac{4v_i(1+v_i)\log (N/\delta) }{\sum_{k = 1}^n \bm{1}\{i\in S_k\}(1+\sum_{j\in S^\star}v_j)^{-1}}} \\
  &\qquad \leq \sum_{i\in S^\star} \sqrt{\frac{4v_i(1+v_i)\log (N/\delta) }{n_i (1+\sum_{i\in S^\star} v_i)}} \\
  &\qquad \leq 
2\sqrt{\frac{(1+V)\log(N/\delta)}{\min_{i\in S^\star} n_i}}  \cdot \frac{\sum_{i\in S^\star}\sqrt{v_i}}{\sqrt{1+\sum_{i\in S^\star} v_i} } \\
&\qquad \leq 2\sqrt{\frac{K(1+V)\log(N/\delta)}{\min_{i\in S^\star} n_i}},
\end{align*}
where the first inequality uses \eqref{eq: uniform condition}, the second inequality uses $r_i\leq 1$, the third inequality uses  $v_i\leq V$, and the last inequality uses Cauchy-Schwartz inequality.
Similarly, for the second term in Theorem~\ref{thm-upper-bound}, we have 
\begin{align*}
    \sum_{i\in S^\star} \frac{r_i}{1+\sum_{i\in S^\star} v_i}\frac{48\log(N/\delta) (1+v_i)}{\sum_{k = 1}^n \bm{1}\{i\in S_k\}(1+\sum_{j\in S_k}v_j)^{-1}} &\leq \frac{48K(1+V)\log(N/\delta)}{\min_{i\in S^\star} n_i},
\end{align*}
as desired, finishing the proof.
\endproof

\subsection{Proof of Lemma~\ref{lem-SubOptGap-via-pessmistic}}\label{subsec: proof pessimistic}

\begin{proof}[Proof of Lemma~\ref{lem-SubOptGap-via-pessmistic}]
    We first prove a general monotone result of the revenue function $R$ under $\bm v,$ similar to those in Lemma~A.3 of \citet{agrawal2019mnl}:
    \begin{lemma}\label{lem-monotone}
    Consider any two positive vectors $\bm v, \bm v'$ with $\bm v \leq \bm v'$ element-wise, and let $S_{\bm v}$ denote the optimal assortment under $\bm v,$ we have then $R(S_{\bm v};\bm v')\geq R(S_{\bm v};\bm v).$    
    \end{lemma}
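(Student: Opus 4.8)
The plan is to deduce the monotonicity from the optimality of $S_{\bm v}$ via the classical \emph{revenue-ordering} property of MNL optimal assortments. First I would fix the shorthand $S := S_{\bm v}$ and $\theta^\star := R(S;\bm v)$, and clear the denominator in the defining identity: from $R(S;\bm v) = \theta^\star$ and $R(S;\bm v) = \sum_{i\in S} r_i v_i / (1+\sum_{i\in S} v_i)$ we get the reformulation $\sum_{i\in S}(r_i - \theta^\star)v_i = \theta^\star$. The point of this rewriting is that it isolates, coordinate by coordinate, the quantity $(r_i - \theta^\star)$ whose sign we can control.

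The key step is to show that every item retained in the optimal assortment is at least as profitable as the optimal revenue, i.e.\ $r_i \geq \theta^\star$ for all $i \in S$. I would prove this by a removal argument: for any $k \in S$, a direct computation of $R(S\setminus\{k\};\bm v) - R(S;\bm v)$ shows it equals $\tfrac{v_k}{\,1+\sum_{i\in S\setminus\{k\}} v_i\,}\bigl(\theta^\star - r_k\bigr)$ up to the positive normalizer, so its sign is that of $\theta^\star - r_k$ (here $v_k>0$ since $\bm v$ is a positive vector). Hence if $r_k < \theta^\star$, then dropping $k$ strictly increases the revenue while keeping $|S\setminus\{k\}| \le K$ feasible, contradicting the optimality of $S$ under $\bm v$. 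Therefore $r_i - \theta^\star \geq 0$ for every $i \in S$.

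Finally I would combine the two facts. Since $r_i - \theta^\star \geq 0$ on $S$ and $v_i' \geq v_i > 0$ coordinatewise, each summand satisfies $(r_i-\theta^\star)v_i' \geq (r_i-\theta^\star)v_i$, so
\[
\sum_{i\in S}(r_i - \theta^\star)v_i' \;\geq\; \sum_{i\in S}(r_i - \theta^\star)v_i \;=\; \theta^\star .
\]
Rearranging $\sum_{i\in S}(r_i-\theta^\star)v_i' \geq \theta^\star$ back into ratio form gives $\sum_{i\in S} r_i v_i' \geq \theta^\star\bigl(1+\sum_{i\in S}v_i'\bigr)$, i.e.\ $R(S;\bm v') \geq \theta^\star = R(S;\bm v)$, which is exactly the claim. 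The main obstacle is the structural revenue-ordering step $r_i \geq \theta^\star$ on the optimal set; once that sign information is secured, the monotonicity is an immediate consequence of the coordinatewise comparison $v_i' \geq v_i$ applied to the cleared-denominator reformulation. I would also dispatch the trivial edge case $S = \emptyset$, where $R(S;\bm v') = R(S;\bm v) = 0$ and the inequality holds automatically.
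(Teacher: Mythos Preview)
Your proof is correct and follows essentially the same approach as the paper: both hinge on the structural fact $r_i \ge R(S_{\bm v};\bm v)$ for every $i\in S_{\bm v}$, then exploit $v_i'\ge v_i$ coordinatewise to conclude. You supply the removal argument for that structural fact explicitly and work with the cleared-denominator form, whereas the paper asserts the fact directly and packages the final comparison through the identity $\tfrac{a+(a/b)\Delta}{b+\Delta}=\tfrac{a}{b}$, but the substance is the same.
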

    \begin{proof}[Proof of Lemma~\ref{lem-monotone}]
    First noticing that by the optimality of $S_{\bm v}$ under $\bm v,$ we have $ R(S_{\bm v};\bm v) \leq r_j, \forall j \in S_{\bm v}.$ Now by the identity $$ \frac{a+(a/b)\Delta}{b+\Delta} = \frac{a}{b}, \forall b>0,a,\Delta\geq 0,$$ it holds that \begin{align*}
    R(S_{\bm v};\bm v') = \frac{\sum_{i\in S_{\bm v}}r_iv_i'}{1+\sum_{i\in S_{\bm v}}v_i' } \geq \frac{\sum_{i\in S_{\bm v}} r_iv_i +R(S_{\bm v};\bm v)\sum_{i\in S_{\bm v}}(v_i' - v_i)}{1+\sum_{i\in S_{\bm v}}v_i + \sum_{i\in S_{\bm v}} (v_i'-v_i) }= R(S_{\bm v};\bm v).
\end{align*}
    \end{proof}
With Lemma~\ref{lem-monotone}, we have then $R(\underline{S};\underline{\bm v}) \leq R(\underline{S};\bm v)$ and \begin{align*}
    R(S^\star;\bm v) -   R(\underline{S};{\bm v})&\leq  R(S^\star;\bm v) -   R(\underline{S};\underline{\bm v})\leq R(S^\star;\bm v) - R(S^\star;\underline{\bm v})\\
    &= \sum_{j\in S^\star}(\frac{r_jv_j}{1+\sum_{j\in S^\star} v_j} - \frac{r_j\underline{v}_j}{1+\sum_{j\in S^\star} \underline{v}_j})\\
    &\leq \sum_{j\in S^\star}(\frac{r_jv_j}{1+\sum_{j\in S^\star} v_j} - \frac{r_j\underline{v}_j}{1+\sum_{j\in S^\star} {v}_j})= \sum_{j\in S^\star}\frac{r_j(v_j - \underline{v}_j)}{1+\sum_{j\in S^\star} v_j},
\end{align*}
where in the last inequality we have used $
    \frac{1}{1+\sum_{j\in S^\star}\underline{v}_j} \geq \frac{1}{1+\sum_{j\in S^\star} v_j}.$
    
\end{proof}

\subsection{Proof of Theorem~\ref{thm-upper-bound}}\label{subsec: proof upper bound} 

\begin{proof}[Proof of Theorem~\ref{thm-upper-bound}]
It has been shown in \citet{saha2024stop} that the PRB algorithm satisfies the following finite sample error bound holds for $\{\vLCB_j\}_{j\in [N]}$ calculated in the PRB algorithm:
\begin{proposition}\label{prop-vLCB}
   With probability at least $1-3 Ne^{-\delta}$, it holds simultaneously for all $i \in[N]$ that $ v^{\text{LCB}}_i \leq v_{i}$ and one of the following two inequalities is satisfied
$$
n_{i 0 }<69 \delta\left(1+v_i\right)
$$
or
$$
v_{i}^{\text{LCB}} \geq v_i - 4\left(1+v_i\right) \sqrt{\frac{2 v_i \delta}{\tau_{i 0}}}+\frac{22 \delta\left(1+v_i\right)^2}{\tau_{i 0}}.
$$
\end{proposition}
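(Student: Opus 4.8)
The plan is to reduce the analysis of $\widehat{p}_j$ to a clean binomial-proportion estimation problem via the rank-breaking structure, apply an empirical Bernstein inequality on the $p$-scale, and then transfer the resulting two-sided confidence interval to the $v$-scale through the monotone reparametrization $p \mapsto p/(1-p)$. First I would exploit the independence-of-irrelevant-alternatives property of the MNL model: for each round $k$ with $j \in S_k$, conditioning on the event $\{i_k \in \{0,j\}\}$ and on $S_k$ gives $\mathbb{P}(i_k = j \mid i_k \in \{0,j\}, S_k) = v_j/(1+v_j) = p_j$, independently of the remaining items in $S_k$. Therefore, after conditioning on the realized indicators $\{\mathbf{1}\{i_k \in \{0,j\}, j \in S_k\}\}_k$ and on $\{S_k\}_k$ --- which fixes $\tau_{j0}$ --- the count $\tau_j$ becomes a sum of $\tau_{j0}$ independent $\mathrm{Bernoulli}(p_j)$ trials, using the conditional-independence assumption on the data. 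Hence $\widehat{p}_j = \tau_j/\tau_{j0}$ is exactly an empirical binomial proportion with $\tau_{j0}$ samples and mean $p_j$, and this conditioning is what lets us treat $\tau_{j0}$ as a deterministic sample size inside the concentration argument.

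Next I would apply the empirical Bernstein inequality to this Bernoulli sample of size $\tau_{j0}$, controlling $\lvert \widehat{p}_j - p_j \rvert$ by $\sqrt{2\widehat{p}_j(1-\widehat{p}_j)\delta/\tau_{j0}} + C\delta/\tau_{j0}$ with probability $1 - O(e^{-\delta})$. The upper-tail direction, matched against the definition of $p_j^{\mathrm{LCB}}$, yields the pessimism guarantee $p_j^{\mathrm{LCB}} \le p_j$ directly; the lower-tail direction bounds how far $p_j^{\mathrm{LCB}}$ can sit below $p_j$. These two directions, together with the auxiliary step of relating the empirical variance $\widehat{p}_j(1-\widehat{p}_j)$ to the true variance $p_j(1-p_j)$, are what produce the factor $3$ in the overall $3Ne^{-\delta}$ failure probability after a union bound over $j \in [N]$.

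Finally I would transfer these $p$-scale bounds to the $v$-scale. Since $g(p) := p/(1-p)$ is increasing, $p_j^{\mathrm{LCB}} \le p_j$ immediately gives $v_j^{\mathrm{LCB}} \le v_j$. For the quantitative lower bound on $v_j^{\mathrm{LCB}}$, I would write $v_j - v_j^{\mathrm{LCB}} = g(p_j) - g(p_j^{\mathrm{LCB}}) \le g'(p_j)(p_j - p_j^{\mathrm{LCB}})$ by the mean value theorem and the monotonicity of $g'$, with $g'(p_j) = (1-p_j)^{-2} = (1+v_j)^2$. Substituting the confidence width from the empirical Bernstein step together with the identities $\sqrt{p_j(1-p_j)} = \sqrt{v_j}/(1+v_j)$ and $1-p_j = 1/(1+v_j)$ reproduces the stated bound, with its $4(1+v_j)\sqrt{2v_j\delta/\tau_{j0}}$ and $22\delta(1+v_j)^2/\tau_{j0}$ terms emerging from the $(1+v_j)^2$ Jacobian factor. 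The dichotomy in the statement arises from the truncation $(\cdot)_+$: when $\tau_{j0}$ is too small --- precisely the regime $\tau_{j0} < 69\delta(1+v_j)$ --- the confidence width exceeds $p_j$, so $p_j^{\mathrm{LCB}}$ truncates to $0$ and the mean-value argument no longer applies, placing us in the first alternative of the proposition.

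The hard part will be making the conditioning argument in the first step fully rigorous: justifying that, despite the random and possibly data-dependent sample size $\tau_{j0}$ and the dependent collection of choices, the relevant pairwise comparisons behave as $\tau_{j0}$ genuinely i.i.d.\ $\mathrm{Bernoulli}(p_j)$ draws --- an alternative being a martingale/self-normalized Bernstein inequality that sidesteps conditioning on a random stopping time. The second delicate point is tracking the exact constants through the $p$-to-$v$ conversion and through the small-sample truncation case, so that the bound emerges in precisely the stated form rather than merely up to absolute constants.
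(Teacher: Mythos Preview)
The paper does not actually prove this proposition: in Appendix~\ref{subsec: proof upper bound} it simply states ``It has been shown in \citet{saha2024stop} that \ldots'' and then quotes Proposition~\ref{prop-vLCB} verbatim. So there is no in-paper argument to compare against; your proposal is a from-scratch reconstruction of a result the paper imports as a black box.

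That said, your outline is sound and is essentially the standard route for such rank-breaking bounds. The IIA reduction to a binomial proportion, empirical Bernstein on the $p$-scale, and the push-forward through $g(p)=p/(1-p)$ with $g'(p_j)=(1+v_j)^2$ are exactly the ingredients one expects, and your algebra linking $\sqrt{p_j(1-p_j)}=\sqrt{v_j}/(1+v_j)$ to the displayed constants is correct. Your reading of the dichotomy is also right: the first alternative is precisely the regime where the confidence radius swallows $\widehat{p}_j$ and the truncation $(\cdot)_+$ forces $p_j^{\mathrm{LCB}}=0$, so the mean-value transfer is vacuous. The one place to be careful is the conditioning step you flag yourself: under the paper's assumption the assortments $\{S_k\}$ may be adaptive, so ``condition on $\tau_{j0}$ and treat the pairwise outcomes as i.i.d.\ Bernoulli'' needs either (i) the observation that, conditional on the full sequence $\{S_k\}$ and on the indicators $\{\mathbf{1}\{i_k\in\{0,j\}\}\}$, the remaining randomness in $\{\mathbf{1}\{i_k=j\}\}$ is genuinely i.i.d.\ $\mathrm{Bernoulli}(p_j)$ because of the MNL conditional-independence assumption, or (ii) a self-normalized martingale Bernstein bound that avoids conditioning on a random count altogether. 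Either route works; you have correctly identified this as the only nontrivial hurdle.
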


On the other hand, W.L.O.G. suppose that $i\in S_k$ for $k\in [n_i]$, then  \begin{align*}
    \tau_{i0} = \sum_{k = 1}^{n_i} Z_k 
\end{align*}
with $Z_k = \bm{1}\{i_k\in \{0,i\} \}\sim \text{Bernoulli}(\frac{1+v_i }{1+\sum_{j\in S_k} v_j})$ condition on $Z_1,\dots,Z_{k-1}$. As a result, \begin{align*}
    \tau_{i0} - (1+v_i)\sum_{k = 1}^{n_i}\frac{1}{1+\sum_{j\in S_k} v_j} = \sum_{k = 1}^{n_i}\underbrace{\big(\bm{1}\{i_k\in \{0,i\}\} - \frac{1+v_i}{1+\sum_{j\in S_k} v_j} \big)}_{:= D_i}
\end{align*}
is a summation of martingale difference sequences $\{D_k\}$ with $$\text{Var}(D_k\lvert D_1,\dots,D_{k-1})= \frac{(1+v_i)\sum_{j\in S_k,j\neq i} v_j}{(1+\sum_{j\in S_k} v_j)^2} \leq \frac{1+v_i}{1+\sum_{j\in S_k} v_j}.$$
Now applying Freedman's inequality(see e.g. Lemma~3 in \cite{rakhlin2012making}) implies with probability at least $1-\delta'\log n,$
\begin{align*}
    \tau_{i0} - \sum_{k = 1}^{n_i}\frac{(1+v_i)}{1+\sum_{j\in S_k} v_j}\geq -2 \max\left\{2\sqrt{\sum_{k = 1}^{n_i}\frac{(1+v_i)}{1+\sum_{j\in S_k} v_j}} ,\sqrt{\log(1/\delta')}\right\} \cdot \sqrt{\log(1/\delta')}.
\end{align*}
Selecting $\delta' = \delta/(N\log N)$ and noticing that $\log((N\log N )/\delta)\leq 2 \log(N/\delta)$, we have with probability at least $1-\delta/N,$ it holds that \begin{align}\label{eq-ni0-chernoff-bound}
    \begin{aligned}
    &\sum_{k = 1}^{n_i} \frac{1}{1+\sum_{j\in S_k} v_j}\geq 256 \log(N/\delta)\\
    &\qquad \implies 2 \max\left\{2\sqrt{\sum_{k = 1}^{n_i}\frac{(1+v_i)}{1+\sum_{j\in S_k} v_j}} ,\sqrt{\log(1/\delta')}\right\} = 4\sqrt{\sum_{k = 1}^{n_i}\frac{(1+v_i)}{1+\sum_{j\in S_k} v_j}}\\
    &\qquad\implies 4\sqrt{\sum_{k = 1}^{n_i}\frac{(1+v_i)}{1+\sum_{j\in S_k} v_j} \cdot 2\log (N/\delta)}\leq \frac{1}{2} \sum_{k = 1}^{n_i} \frac{1+v_i}{1+\sum_{j\in S_k} v_j} \\
    &\qquad\implies \tau_{i0} \geq \frac{1}{2}\sum_{k = 1}^{n_i} \frac{1+v_i}{1+\sum_{j\in S_k} v_j} \geq 69 \log(N/\delta)
    \end{aligned}
\end{align}
Now plugging Proposition~2 and \eqref{eq-ni0-chernoff-bound} into the Lemma~\ref{lem-SubOptGap-via-pessmistic}, we have that given the condition \begin{align*}
\sum_{k =1}^n \frac{\bm{1}\{i\in S_k\}}{1+\sum_{j\in S_k }v_j} \geq 256\log(N/\delta),\quad \forall i\in S^\star,
\end{align*}
it holds that with probability at least $1-2\delta$, 
\begin{align*}
        &\mathrm{SubOpt}( \widehat{S}; \bm{v}) \\
        &\qquad \le \frac{1}{1+\sum_{i\in S^\star} v_i}  \sum_{i\in S^\star} r_i\cdot \left(\sqrt{\frac{4v_i(1+v_i)\log (N/\delta) }{\sum_{k = 1}^n \bm{1}\{i\in S_k\}(1+\sum_{j\in S_k}v_j)^{-1}}}  + \frac{48\log(N/\delta) (1+v_i)}{\sum_{k = 1}^n \bm{1}\{i\in S_k\}(1+\sum_{j\in S_k}v_j)^{-1}} \right).
\end{align*}
This completes the proof of Theorem~\ref{thm-upper-bound}.
\end{proof}

\section{Proof of Auxiliary Results in Section~\ref{sec-lower-bound-proof}}\label{sec: proof lemma lower bound}

\subsection{Proof of Lemma~\ref{lem-packing-number}}

\begin{proof}[Proof of Lemma~\ref{lem-packing-number}]
We assume without loss of generality that $K/8$ is an integer, otherwise we can replace $K/4,K/8$ by $\lfloor K/4\rfloor,\lfloor K/8\rfloor$ all the analysis still hold. Now for every $K$-sized subset $S$ of $[4K]$, define its $K/4$-neighborhood under $\Delta$ as \begin{align*}
    \delta(S,K/4):= \{S'\subset [4K]: \lvert S' \rvert = K, \Delta(S,S') \leq K/4 \}.
\end{align*}

We have then denote $\mathcal{F}$ as the maximal $K/2$-packing set consists of $K$-sized subsets of $[4K]$, we have \begin{align*}
    \lvert \delta(S,K/4)\rvert \cdot \lvert \mathcal{F} \rvert \geq \binom{4K}{K}.
\end{align*}
Since otherwise there will exists some $K$-sized set does not lies in any $K/4$ neighborhood of $S\in \mathcal{F},$ a contradiction to $\mathcal{F}$ is maximal. 

Now it suffice to provide lower bound for $\binom{4K}{K}/\lvert \delta(S,K/4)\rvert:$

\paragraph{Upper bound for $\delta(S,K/2)$.} For any  $S,S'\subset [4K]$ with $\lvert S \rvert = \lvert S'\rvert = K$, $\Delta(S,S')$ must be even and for $m\leq K/4$ we have \begin{align*}
    \lvert S\cap S'\rvert  = K-m \iff \lvert S\cup S'\rvert = K+m  \iff \Delta(S,S') = 2m,  
\end{align*}
as a result, we have \begin{align*}
    \lvert \delta(S,K/4) \rvert &\leq 1+\sum_{m=1}^{ K/8 }\lvert \{S':\lvert S'\rvert = K, \lvert S\cap S'\rvert = m\}\rvert \\
    &=1+ \sum_{m=1}^{ K/8 } \binom{K}{m}\binom{3K}{K-m} \\
    &\leq \frac{K}{2}\cdot \max_{1\leq m\leq  K/8 } \underbrace{\binom{K}{m}\binom{3K}{K-m}}_{:= f(m)}.
\end{align*}

Now since for every $1\leq m\leq  K/8,$ we have \begin{align*}
    \frac{f(m+1)}{f(m)} = \frac{\binom{K}{m+1}\binom{3K}{K-(m+1)} }{\binom{K}{m}\binom{3K}{K-m}} = \frac{(K-m)^2}{(m+1)(2K+m+1)},
\end{align*}
thus for every $m< K/4,$ we have \begin{align*}
    \frac{(K-m)^2}{(m+1)(2K+m+1)} > \frac{{9}K^2/{16}}{\frac{K}{4}\cdot \frac{9}{4}K   } =1,
\end{align*}
i.e. the maximal of $f(\cdot)$ on $\{1,\dots, K/8 \}$ is attained at $m = K/8.$

As a result, we get \begin{align*}
    \frac{K}{2}\cdot \lvert \mathcal{F}\rvert &\geq \frac{\binom{4K}{K}}{f( \frac{K}{8})} = \frac{\binom{4K}{K}}{\binom{K}{ \frac{K}{8}}\binom{3K}{\frac{7K}{8}}}= \frac{(4K)!}{K!(3K)!}\bigg/\bigg( \frac{K!(3K)!}{(\frac{7K}{8})!( \frac{K}{8} )! \cdot (\frac{7K}{8})! (\frac{17K}{8} )!} \bigg).
\end{align*}
Now applying Stirling's approximation, where there exists some absolute constant $c>0$ so that \begin{align}\label{chx-eq: strling}
   e^{-c/n} \leq  \frac{n!}{\sqrt{2\pi n} (n/e)^n}  \leq e^{c/n},\quad \forall n\geq 1,
\end{align}
we have there exists some absolute constants $c_1,c_2,c'>0$ so that
\begin{align*}
    &\frac{(4K)!}{K!(3K)!}\bigg/\bigg( \frac{K!(3K)!}{(\frac{7K}{8})!( \frac{K}{8} )! \cdot (\frac{7K}{8})! (\frac{17K}{8} )!} \bigg) \\
    &= (4K)! (\frac{K}{8})! (\frac{17K}{8})! \cdot\bigg(\frac{(7K/8)!}{K!(3K)!} \bigg)^2\\
    &\geq_{\text{(i)}} c_1 e^{-c_2/K}\sqrt{K} \frac{(4K)^{4K} (K/8)^{K/8} (17K/8)^{17K/8} (7K/8)^{7K/4}}{(3K)^{6K} K^{2K}}\\
    &\geq c'\sqrt{K} \bigg(\underbrace{\frac{4^4 (1/8)^{1/8} (17/8)^{17/8} (7/8)^{7/4} }{3^6 }}_{\geq \frac{26}{25}}\bigg)^K.
\end{align*}
Where (i) is by~\eqref{chx-eq: strling}. 
Thus we can get\begin{align*}
    \lvert \mathcal{ F} \rvert \geq \frac{2c'}{\sqrt{K}} \cdot (\frac{26}{25})^K, 
\end{align*}
as a result, there exists some absolute number $C_0 >0$ so that when $K >C_0,$ we have then
\begin{align*}
    \frac{2C'}{\sqrt{K}} (\frac{26}{25})^{K/2} > 1 \implies \lvert \mathcal{ F} \rvert \geq (\frac{26}{25})^{K/2} \implies \log \lvert  \mathcal{F}\rvert &\geq \underbrace{\frac{\log(\frac{26}{25})}{2}}_{:= C_1} K,
\end{align*}
as desired.
\end{proof}

\subsection{Proof of Lemma~\ref{lem-lb-via-distance}}

\begin{proof}[Proof of Lemma~\ref{lem-lb-via-distance}]

Firstly, notice that for any $S \in \mathcal{S}_K$ that contains some $i\in \Nsubopt,$ we have replacing 
such $i$ to any other $\widetilde{i} \in \Nc(\bm r, \bm v)$ will not change $\Delta(\Nopt(\bm r, \bm v),S)$ and will not decrease the revenue of $S$, thus it sufficient to consider those $S\subset [4K].$ For any such $S,$ by $r_i \equiv 1$, we have denoting $n_{S,S'} =\lvert S\cap S' \rvert, $ then
\begin{align*}
R(\Nopt(\bm r, \bm v);\bm{r}, \bm{v}) - R(S;\bm{r}, \bm{v}) &= \frac{1+K\epsilon}{2 +K\epsilon} - \frac{1 + \lvert S\cap S' \rvert \epsilon}{2 + \lvert S \cap S'\rvert \epsilon}\\
&\geq \frac{(K - n_{S,S'})\epsilon}{\big(2+K\epsilon)^2}\geq \frac{\epsilon}{18}\Delta(\Nopt(\bm r, \bm v), S),
\end{align*}
where in the last inequality we have used $K\epsilon\leq 1$ and $K-n_{S,S'} \geq \frac{1}{2}\Delta(S,S' )$.
\end{proof}

\subsection{Proof of Lemma~\ref{lem-KL-new-upper-bound-non-uniform} }
\begin{proof}[Proof of Lemma~\ref{lem-KL-new-upper-bound-non-uniform}]
Denoting $\bm q, \bm p$ the mass of distribution induced by $\bm {v}$ and $\bm {v'}$ under $S^{(\ell)}_1$ respectively. 
Then as shown in \citet{chen2018note,lee2024nearly}, the KL divergence is upper bounded as \begin{align*}
    \sum_{j\in (S^{(\ell)}_1)_+} p_j \log \frac{p_j}{q_j} \leq \sum_{j\in (S^{(\ell)}_1)_+}\frac{p_j(p_j - q_j)}{q_j}= \sum_{j\in (S^{(\ell)}_1)_+}\frac{(p_j - q_j)^2}{q_j} 
\end{align*} 
where we have used $\log (1+x) \leq x$ for $x > -1$ in the first inequality and $\sum_{j\in (S^{(\ell)}_1)_+} (p_j - q_j) = 0$ in 
the second one.

Now if $\ell \notin (\Nopt(\bm r, \bm v) \cup \Nopt(\bm r', \bm v'))\setminus(\Nopt(\bm r, \bm v)\cap \Nopt(\bm r', \bm v'))$ we have then $p_i =q_i,\quad \forall i \in (S^{(\ell)}_1)_+,$ that leads to the second inequality.

To prove the first inequality, we divide the upper bound of the last summation into two cases:

\noindent i) For $j \neq \ell,$ we have by $j\in \Nsubopt$ and $v_j = v_j' =1$, \begin{align*}
    \lvert p_j - q_j\rvert &=  \Big| \frac{1}{K +1/K+ \epsilon} - \frac{1}{K+1/K} \Big|= \frac{\epsilon}{(K+1/K)(K+1/K + \epsilon)} \leq \frac{\epsilon}{K^2},
\end{align*}
thus then 
\begin{align*}
    \frac{(p_j - q_j)^2}{q_j} & \leq \frac{\epsilon^2(K+1/K) }{K^4}  \leq \frac{2\epsilon^2}{K^3} .
\end{align*}

\noindent ii) For $j = \ell,$ we have \begin{align*}
    \lvert p_j - q_j \rvert = \Big| \frac{1/K+\epsilon}{K+1/K + \epsilon} - \frac{1/K}{K+1/K}\Big| =  \frac{(K+1/K)(1/K+\epsilon) - (K+1/K+\epsilon)1/K}{(K+1/K)(K+1/K+\epsilon)} \leq \frac{\epsilon }{K},
\end{align*}
which then implies \begin{align*}
\frac{(p_i - q_i)^2}{q_i} & \leq  \frac{\epsilon^2}{K^2}\cdot (2K)^2 \leq 4\epsilon^2.
\end{align*}
Combining above two cases, we get then \begin{align*}
\sum_{j\in (S^{(\ell)}_1)_+} p_j \log \frac{p_j}{q_j} & \leq 5\epsilon^2,
\end{align*}
as desired.
\end{proof}

\section{Proof of Lower Bound for Uniform Reward (Theorem~\ref{thm-lower-bound-uniform})}\label{appendix-proof-lower-bound-uniform}

\subsection{Modifications on the hard-instance construction}

Here we summarize the change we made for uniform-reward setting on the construction in section~\ref{sec-construction}:
\begin{enumerate}
    \item For items in $\Nopt$, we modify the attraction value to $\frac{1}{K} +\epsilon' $ with $\epsilon' < \frac{1}{4K}$ to be specified later.
    \item For items in $\Nsubopt$, we modify the attraction value to $\frac{1}{K} $ and revenue to $1.$
\end{enumerate}

It can be verified directly that after such modification, Lemma~\ref{lem-lb-via-distance} still holds, with $\epsilon$ replaced by $\epsilon'.$ Now with Lemma~\ref{lem-lb-via-distance}, we can reduce the proof of Theorem~\ref{thm-lower-bound-uniform} to showing that \begin{equation}\label{eq-reduce-to-distance-lb-uniform}
        \min_{\pi}\max_{(\bm r, \bm v) \in \mathcal{V}} \E[ \Delta (\Nopt(\bm r, \bm v),\pi (D))] \gtrsim  (\epsilon')^{-1}\sqrt{\frac{K}{\nmin}}.
\end{equation}

Similar as in non-uniform reward setting, by applying Fano's lemma, we arrive at \begin{align*}
       \min_{\pi}\max_{(\bm r, \bm v) \in \mathcal{V}} \E[ \Delta (\Nopt(\bm r, \bm v),\pi (D))] \geq  K\left(\frac{1}{2} - \frac{\sum_{(\bm r, \bm v)\in \mathcal{V}}\sum_{(\bm r', \bm v')\in \mathcal{V}} D(\mathbb{P}_{\bm r, \bm v} \lVert \mathbb{P}_{\bm r', \bm v'}) }{C_1 K\lvert \mathcal V \rvert^2} \right).
\end{align*}

In the next section, we recalculate the upper bound of the KL divergences to finish the proof.

\subsection{Upper bounds of KL-divergence}
For any pairs $(\bm r, \bm v),(\bm r', \bm v') \in \mathcal{V},$ we have as in section~5.3,
\begin{align*}
    D(\mathbb{P}_{\bm r, \bm v} \lVert \mathbb{P}_{\bm r', \bm v'}) &= D\left( \prod_{\ell=1}^{4K}\prod_{j = 1}^{\nmin} \mathbb{Q}_{\bm v}(S_{j}^{(\ell)}) \bigg\lVert \prod_{\ell=1}^{4K}\prod_{j = 1}^{\nmin} \mathbb{Q}_{\bm v'}(S_{j}^{(\ell)})  \right)= n_{\min}\sum_{\ell = 1}^{4K} D(\mathbb{Q}_{\bm v} (S^{(\ell)}_1) \rVert \mathbb{Q}_{\bm v'} (S^{(\ell)}_1)).
\end{align*}

Now for each $\ell,$ we show that 
\begin{equation}\label{eq-pairwise-KL-bound-uniform}
D(\mathbb{Q}_{\bm v} (S^{(\ell)}_1) \rVert \mathbb{Q}_{\bm v'} (S^{(\ell)}_1))\leq\begin{cases}
    5K(\epsilon')^2,  & \text{if }  \ell \in (\Nopt(\bm r, \bm v) \cup \Nopt(\bm r', \bm v'))\setminus(\Nopt(\bm r, \bm v)\cap \Nopt(\bm r', \bm v')), \\
    0,& \text{otherwise.}
\end{cases} 
\end{equation}

\begin{proof}[Proof of equation \eqref{eq-pairwise-KL-bound-uniform}]
   Similar as in proof of Lemma~\ref{lem-KL-new-upper-bound-non-uniform}, we denote $\bm q, \bm p$ the mass of distribution induced by $\bm {v}$ and $\bm {v'}$ under $S^{(\ell)}_1$ respectively. 
Then it holds that \begin{align*}
    \sum_{j\in (S^{(\ell)}_1)_+} p_j \log \frac{p_j}{q_j} \leq \sum_{j\in (S^{(\ell)}_1)_+}\frac{p_j(p_j - q_j)}{q_j}= \sum_{j\in (S^{(\ell)}_1)_+}\frac{(p_j - q_j)^2}{q_j}. 
\end{align*} 

Now if $\ell \notin (\Nopt(\bm r, \bm v) \cup \Nopt(\bm r', \bm v'))\setminus(\Nopt(\bm r, \bm v)\cap \Nopt(\bm r', \bm v'))$ we have then $p_i =q_i, \forall i \in (S^{(\ell)}_1)_+,$ that leads to the second inequality.

To prove the first inequality, we divide the upper bound of the last summation into three cases:

\noindent i) For $j = 0$, we have \begin{align*}
    \lvert p_j - q_j \rvert = \lvert \frac{1}{2+{\epsilon'}} - \frac{1}{2}\rvert\leq \frac{{\epsilon'}}{4},
\end{align*}
thus then by $q_j \geq 1/4$, \begin{align*}
    \frac{(p_j -q_j)^2}{q_j}\leq \frac{(\epsilon')^2}{4}
\end{align*}

\noindent ii) For $j \notin \{0,\ell\},$ we have \begin{align*}
    \lvert p_j - q_j\rvert &=  \Big| \frac{1}{(2+{\epsilon'})K} - \frac{1}{2K} \Big|\leq  \frac{{\epsilon'}}{4K},
\end{align*}
thus then by $q_j \geq 1/4K,$
\begin{align*}
    \frac{(p_j - q_j)^2}{q_j} & \leq \frac{(\epsilon')^2}{4K} .
\end{align*}

\noindent iii) For $j = \ell,$ we have \begin{align*}
    \lvert p_j - q_j \rvert = \Big| \frac{1/K+{\epsilon'}}{2 + {\epsilon'}} - \frac{1}{2K}\Big| \leq  \frac{(1+K{\epsilon'})2K - (2+{\epsilon'})K }{4K^2} \leq \frac{{\epsilon'}}{2},
\end{align*}
then $q_j \geq \frac{1}{4K}$ implies \begin{align*}
\frac{(p_j - q_j)^2}{q_j} & \leq  K(\epsilon')^2.
\end{align*}
Combining above three cases, we get then \begin{align*}
\sum_{j\in (S^{(\ell)}_1)_+} p_j \log \frac{p_j}{q_j} & \leq 5K(\epsilon')^2,
\end{align*}
as desired.
\end{proof}

Now with \eqref{eq-pairwise-KL-bound-uniform} and set $\epsilon' =\sqrt{\frac{C_1}{20 K\nmin}},$ we have $\max_{(\bm r, \bm v),(\bm r', \bm v')} D(\mathbb{P}_{\bm r, \bm v} \lVert \mathbb{P}_{\bm r', \bm v'})\leq C_1K/4,$ which then leads to

\begin{align*}
    K\cdot \left(\frac{1}{2} - \frac{\sum_{(\bm r, \bm v)\in \mathcal{V}}\sum_{(\bm r', \bm v')\in \mathcal{V}} D(\mathbb{P}_{\bm r, \bm v} \lVert \mathbb{P}_{\bm r', \bm v'}) }{C_1K\lvert \mathcal V \rvert^2} \right) \geq \frac{K}{4},
\end{align*}
that finishes the proof of \eqref{eq-reduce-to-distance-lb-uniform}, which then implies Theorem~\ref{thm-lower-bound-uniform}.

\end{document}